\def\eqref#1{equation~\ref{#1}}
\def\1{\bm{1}}
\newcommand\independent{\protect\mathpalette{\protect\independenT}{\perp}}
\def\independenT#1#2{\mathrel{\rlap{$#1#2$}\mkern2mu{#1#2}}}
\DeclareMathAlphabet{\mathsfit}{\encodingdefault}{\sfdefault}{m}{sl}
\SetMathAlphabet{\mathsfit}{bold}{\encodingdefault}{\sfdefault}{bx}{n}
\def\gA{{\mathcal{A}}}
\def\gB{{\mathcal{B}}}
\def\gE{{\mathcal{E}}}
\def\gG{{\mathcal{G}}}
\def\gH{{\mathcal{H}}}
\def\gI{{\mathcal{I}}}
\def\gJ{{\mathcal{J}}}
\def\gM{{\mathcal{M}}}
\def\gQ{{\mathcal{Q}}}
\def\gS{{\mathcal{S}}}
\def\gU{{\mathcal{U}}}
\def\gV{{\mathcal{V}}}
\def\gW{{\mathcal{W}}}
\DeclareMathOperator*{\argmax}{arg\,max}
\newtheorem{assumption}{Assumption} %[section]
\newtheorem{definition}{Definition} %[section]
\newtheorem{theorem}{Theorem} %[section]
\newtheorem{proposition}{Proposition} %[section]
\newtheorem{lemma}{Lemma} %[section]
\newcommand{\qa}[1]{{\textcolor{RoyalBlue}{{#1}}}}
\newcommand{\highest}[1]{{\textbf{{#1}}}}
\title{Generalizing Goal-Conditioned Reinforcement Learning with Variational Causal Reasoning}
\author{%
  Wenhao Ding$^1$, Haohong Lin$^1$, Bo Li$^2$, Ding Zhao$^1$ \\
  $^1$Carnegie Mellon University\\
  $^2$University of Illinois Urbana-Champaign \\
  \texttt{\{wenhaod, haohongl\}@andrew.cmu.edu, lbo@illinois.edu, dingzhao@cmu.edu} \\
}
\begin{document}

\doparttoc % Tell to minitoc to generate a toc for the parts
\faketableofcontents % Run a fake tableofcontents command for the partocs

\maketitle

%\begin{comment}

\begin{abstract}

As a pivotal component to attaining generalizable solutions in human intelligence, reasoning provides great potential for reinforcement learning (RL) agents' generalization towards varied goals by summarizing part-to-whole arguments and discovering cause-and-effect relations.
However, how to discover and represent causalities remains a huge gap that hinders the development of causal RL. 
In this paper, we augment Goal-Conditioned RL (GCRL) with \textit{Causal Graph (CG)}, a structure built upon the relation between objects and events.
We novelly formulate the GCRL problem into variational likelihood maximization with CG as latent variables.
To optimize the derived objective, we propose a framework with theoretical performance guarantees that alternates between two steps: using interventional data to estimate the posterior of CG; using CG to learn generalizable models and interpretable policies.
Due to the lack of public benchmarks that verify generalization capability under reasoning, we design nine tasks and then empirically show the effectiveness of the proposed method against five baselines on these tasks. 
Further theoretical analysis shows that our performance improvement is attributed to the virtuous cycle of causal discovery, transition modeling, and policy training, which aligns with the experimental evidence in extensive ablation studies. Code is available on \url{https://github.com/GilgameshD/GRADER}.

\end{abstract}

%%%%%%%%%%%%%%%%%%%%%%%%%%%%%%%%%%%%%%%%%%%%%%%%%%
\section{Introduction}
%%%%%%%%%%%%%%%%%%%%%%%%%%%%%%%%%%%%%%%%%%%%%%%%%%

The generalizability, which enables an algorithm to handle unseen tasks, is fruitful yet challenging in multifarious decision-making domains.
Recent literature~\cite{xu2021bayesian, cranmer2006discovering, li2020ngs} reveals the critical role of reasoning in improving the generalization of reinforcement learning (RL). 
However, most off-the-shelf RL algorithms~\cite{sutton2018reinforcement} have not regarded reasoning as an indispensable accessory, thus usually suffering from data inefficiency and performance degradation due to the mismatch between training and testing settings.
To attain generalization at the testing stage, some efforts were put into incorporating domain knowledge to learn structured information, including sub-task decomposition~\cite{lu2021learning} and program generation~\cite{yang2021program, landajuela2021discovering, han2020neuro, zhao2021proto, sun2019program}, which guide the model to solve complicated tasks in an explainable way.
However, such symbolism-dominant methods heavily depend on the re-usability of sub-tasks and pre-defined grammars, which may not always be accessible in decision-making tasks.

% 4. We focus on the last type: learn causal structure
Inspired by the close link between reasoning and the cause-and-effect relationship, causality is recently incorporated to compactly represent the aforementioned structured knowledge in RL training~\cite{gershman2017reinforcement}. 
Based on the form of causal knowledge, we divide the related works into two categories, i.e., \textit{implicit} and \textit{explicit} causation. 
With \textit{implicit} causal representation, researchers ignore the detailed causal structure. For instance, \cite{zhang2020invariant} extracts invariant features as one node that influences the reward function, while the other node consists of task-irrelevant features~\cite{tomar2021model, sontakke2021causal, sodhani2022improving, bica2021invariant}. 
This neat structure has good scalability but requires access to multiple environments that share the same invariant feature~\cite{zhang2020invariant, bica2021invariant, han2021learning}.
In contrast, one can turn to the \textit{explicit} side by estimating detailed causal structures~\cite{wangtask, volodin2020resolving, seitzer2021causal, gasse2021causal}, which uses directed graphical models to capture the causality in the environment.
A pre-request for this estimation is the object-level or event-level abstraction of the observation, which is available in most tasks and also becoming a frequently studied problem~\cite{abel2022theory, shanahan2022abstraction, abel2018state}. 
However, existing \textit{explicit} causal reasoning RL models either require the true causal graph~\cite{nair2019causal} or rely on heuristic design without theoretical guarantees~\cite{wangtask}.

% 4. our methods
In this paper, we propose \textit{\textbf{G}ene\textbf{RA}lizing by \textbf{D}iscov\textbf{ER}ing (GRADER)}, a causal reasoning method that augments the RL algorithm with data efficiency, interpretability, and generalizability.
We mainly focus on Goal-Conditioned RL (GCRL)~\cite{liu2022goal}, where different goal distributions during training and testing reflect the generalization.
We formulate the GCRL into a probabilistic inference problem~\cite{levine2018reinforcement} with a learnable causal graph as the latent variable. 
This novel formulation naturally explains the learning objective with three components -- transition model learning, planning, and causal graph discovery -- leading to an optimization framework that alternates between causal discovery and policy learning to gain generalizability.
Under some mild conditions, we prove the unique identifiability of the causal graph and the theoretical performance guarantee of the proposed framework.

% 5. our contribution
To demonstrate the effectiveness of the proposed method, we conduct comprehensive experiments in environments that require strong reasoning capability. 
Specifically, we design two types of generalization settings, i.e., spuriousness and composition, and provide an example to illustrate these settings in Figure~\ref{fig:motivation}.
The evaluation results confirm the advantages of our method in two aspects.
First, the proposed data-efficient discovery method provides an explainable causal graph yet requires much fewer data than previous methods, increasing data efficiency and interpretability during task solving.
Second, simultaneously discovering the causal graph during policy learning dramatically increases the success rate of solving tasks.
In summary, the contribution of this paper is threefold:
\vspace{-2mm}
\begin{itemize}[leftmargin=0.3in]
    \item We use the causal graph as a latent variable to reformulate the GCRL problem and then derive an iterative training framework from solving this problem.
    %\vspace{-1mm}
    \item We prove that our method uniquely identifies true causal graphs, and the performance of the iterative optimization is guaranteed with a lower bound given converged transition dynamics.
    %\vspace{-1mm}
    \item We design nine tasks in three environments that require strong reasoning capability and show the effectiveness of the proposed method against strong baselines on these tasks.
    \vspace{-4mm}
\end{itemize}

\begin{figure}
    \centering
    \includegraphics[width=0.9\textwidth]{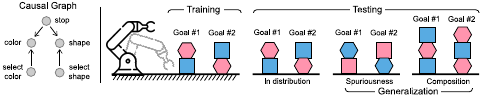}
    \caption{The robot picks and places objects to achieve given goals. \textbf{Left:} The causal graph of this task.
    \textbf{Right:} Training setting: the hexagon is always pink and the box is always blue. Three testing settings: (1) In distribution: the same as the training setting. (2) Spuriousness: swap the color and shape to break the spurious correlation. (3) Composition: increase the height of the goal.}
    \label{fig:motivation}
    \vspace{-4mm}
\end{figure}

%%%%%%%%%%%%%%%%%%%%%%%%%%%%%%%%%%%%%%%%%%%%%%%%%%
\section{Problem Formulation and Preliminary}
%%%%%%%%%%%%%%%%%%%%%%%%%%%%%%%%%%%%%%%%%%%%%%%%%%

We start by discussing the setting we consider in this paper and the assumptions required in causal reasoning. Then we briefly introduce the necessary concepts related to causality and causal discovery.

\subsection{Factorized Goal-conditioned RL}
\label{FactorMDP}

We assume the environment follows the Goal-conditioned Markov Decision Process (MDP) setting with full observation.
This setting is represented by a tuple $\mathcal{M}=(\mathcal{S}, \mathcal{A}, \mathcal{P}, \mathcal{R}, {G})$, where $\mathcal{S}$ is the state space, $\mathcal{A}$ is the action space, $\mathcal{P}$ is the probabilistic transition model, ${G} \subset \mathcal{S}$ is the goal space which is a set of assignment of values to states, and $r(s, g) = \mathds{1}(s = g) \in \mathcal{R}$
% ~\footnote{Without loss of generality, we assume the deterministic reward function is normalized between 0 and 1.}, 
is the sparse deterministic reward function that returns 1 only if the state $s$ match the goal $g$.
In this paper, we focus on the goal-conditioned generalization problem, where the goal for training and testing stages will be sampled from different distributions $p_{\text{train}}(g)$ and $p_{\text{test}}(g)$. 
We refer to a goal $g \in G$ as a task and use these two terms interchangeably.
To accomplish the causal discovery methods, we make a further assumption similar to~\cite{seitzer2021causal, boutilier2000stochastic} for the state and action space:
\begin{assumption}[Space Factorization]\label{assumption_factorize}
The state space $\mathcal{S} = \{ \mathcal{S}_1 \times \cdots \times \mathcal{S}_M \}$ and action space $\mathcal{A} = \{ \mathcal{A}_1 \times \cdots \times \mathcal{A}_N \}$ can be factorized to disjoint components $\{\mathcal{S}_i\}_{i=1}^{M}$ and $\{\mathcal{A}_i\}_{i=1}^{N}$.
\end{assumption}
The components representing one event or object's property usually have explicit semantic meanings for better interpretability.
This assumption can be satisfied by state and action abstraction, which has been widely investigated in~\cite{abel2022theory, shanahan2022abstraction, abel2018state}.
Such factorization also helps deal with the high-dimensional states since it could be intractable to treat each dimension as one random variable~\cite{wangtask}.

\vspace{-2mm}
\subsection{Causal Reasoning with Graphical Models}
\label{cg_scm}

Reasoning with causality relies on specific causal structures, which are commonly represented as directed acyclic graphs (DAGs)~\cite{peters2017elements} over variables. Consider random variables $\bm{X} = (X_1,\dots, X_d)$ with index set $\bm{V}:= \{1,\dots,d \}$. A graph $\gG = (\bm{V}, \mathcal{E})$ consists of nodes $\bm{V}$ and edges $\mathcal{E} \subseteq \bm{V}^2$ with $(i, j)$ for any $i, j \in \bm{V}$.
A node $i$ is called a parent of $j$ if $e_{ij} \in \gE$ and $e_{ji} \notin \gE$. The set of parents of $j$ is denoted by $\textbf{PA}^{\gG}_j$. 
We formally discuss the graph representation of causality with two definitions:
\begin{definition}[Structural Causal Models~\cite{peters2017elements}]
A structural causal model (SCM) $\mathfrak{C}:= (\bm{S}, \bm{U})$ consists of a collection $\bm{S}$ of $d$ functions
$%\begin{equation}
    X_j := f_j(\textbf{PA}^{\gG}_j, U_j),\ j \in [d],\ 
$%\end{equation}
where $\textbf{PA}_j \subset \{ X_1,\dots,X_d \} \backslash \{X_j \}$ are called parents of $X_j$; and a joint distribution $\bm{U} = \{ U_1,\dots,U_d \}$ over the noise variables, which are required to be jointly independent.
\label{def_scm}
\end{definition}

\begin{definition}[Causal Graph~\cite{peters2017elements}, CG]
The causal graph $\gG$ of an SCM is obtained by creating one node for each $X_j$ and drawing directed edges from each parent in $\textbf{PA}^{\gG}_j$ to $X_j$.
\end{definition}
We note that CG describes the structure of the causality, and SCM further considers the specific causation from the parents of $X_j$ to $X_j$ via $f_j$ as well as exogenous noises ${U_j}$.
%relates the GNN with SCM~\cite{zecevic2021relating}
To uncover the causal structure from data distribution, we assume that the CG satisfies the \textit{Markov Property} and \textit{Faithfulness}~\cite{peters2017elements}, which make the independences consistent between the joint distribution $P(X_1,\dots,X_n)$ and the graph $\gG$. 
We also follow the \textit{Causal Sufficiency} assumption~\cite{spirtes2000causation} that supposes we have measured all the common causes of the measured variables.%, which is reasonable under our fully observable MDP setting where all states are observable.
% todo: In interactive RL environment, it is also possible to get interventional data 

Existing work~\cite{pitis2020counterfactual, seitzer2021causal} believes that two objects have causality only if they are close enough while there is no edge between them if the distance is large.
Instead of using such a local view of the causality, we assume the causal graph is consistent across all time steps, which also handles the local causal influence. The specific influence indicated by edges is estimated by the function $f_j(\textbf{PA}_j, U_j)$. 
%In our model, we assume there is always one edge between two nodes even if they have direct causation in a short time interval.

\begin{wrapfigure}{r}{0.5\textwidth}
\vspace{-5mm}
\centering
\includegraphics[width=0.5\textwidth]{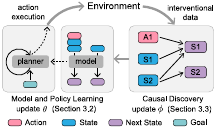}
\caption{The paradigm of GRADER.}
\label{fig:framework}
\vspace{-10mm}
\end{wrapfigure}

%%%%%%%%%%%%%%%%%%%%%%%%%%%%%%%%%%%%%%%%%%%%%%%%%%
\section{Generalizing by Discovering (GRADER)}
%%%%%%%%%%%%%%%%%%%%%%%%%%%%%%%%%%%%%%%%%%%%%%%%%%

With proper definitions and assumptions, we now look at the proposed method. We first derive the framework of GRADER by formulating the GCRL problem as a latent variable model, which provides a variational lower bound to optimize. Then, we divide this objective function into three parts and iteratively update them with a performance guarantee.

\subsection{GCRL as Latent Variable Models}
\label{gcrl_lvm}

The general objective of RL is to maximize the expected reward function w.r.t. a learnable policy model $\pi$. Particularly, in the goal-conditioned setting, such objective is represented as $\max_{\pi} \mathbb{E}_{\tau \sim \pi, g\sim p(g)}[ \sum_{t=0}^{T} r(s^t, g)]$, where $p(g)$ is the distribution of goal and $\tau := \{ s^0,a^0,\dots,s^{T} \}$ is the action-state trajectory with maximal time step $T$. The trajectory ends only if the goal is achieved $g = s^T$ or the maximal step is reached.

Inspired by viewing the RL problem as probabilistic inference~\cite{abdolmaleki2018maximum, levine2018reinforcement}, we replace the objective from \textit{``How to find actions to achieve the goal?''} to \textit{``what are the actions if we achieve the goal?''}, leading to a likelihood maximization problem for $p(\tau|s^*)$ with $s^*:= \mathds{1}(g=s^T)$.
Different from previous work~\cite{marino2019inference} that recasts actions as latent variables and infers actions that result in ``observed'' high reward, we decompose $p(\tau|s^*)$ with $\gG$ as the latent variable to get the evidence lower bound (ELBO)
\begin{equation}
    \log p(\tau|s^*) = \log \int p(\tau|\gG, s^*) p(\gG | s^*) d\gG  
    \geq \mathbb{E}_{q(\gG|\tau)}[\log p(\tau|\gG, s^*)] - \mathbb{D}_{\text{KL}}[q(\gG|\tau) || p(\gG)], \\
\label{elbo}
\end{equation}
where the prior $p(\gG)$ and variational posterior $q(\gG|\tau)$ represent distributions over graph structures, i.e, the probability of the existences of edges in graphs. $\mathbb{D}_{\text{KL}}$ denotes the Kullback–Leibler (KL) divergence between two graphs, which will be explained in Section~\ref{causal_discovery}.
Recall that the space of goal $G$ is a subset of the state, we extend the meaning of $g$ and assume all trajectories achieve the goal in the final state~\cite{andrychowicz2017hindsight}, i.e., $g = s^T$. Such extension makes it possible to further decompose the first term of (\ref{elbo}) as (refer to Appendix~\ref{app:derivation_transition})
\begin{equation}
    \log p(\tau|\gG, s^*) = \log p(s^0) + \sum_{t=0}^{T-1} \log p(s^{t+1}| s^{t}, a^{t}, \gG) + \sum_{t=0}^{T-1} \log \pi(a^{t}|s^{t}, s^*, \gG) + \log p(g).
\label{transition}
\end{equation}

Here, we use the fact that $\gG$ is effective in both the transition model $p(s^{t+1}|a^{t}, s^{t}, \gG)$ and the policy model $\log p(a^{t}|s^{t}, s^*, \gG)$, $g$ only influences the policy model, and the initial state $s_0$ depends neither on $\gG$ nor $g$.
We also assume that both initial state $\log p(s_0)$ and goal $\log p(g)$ follow the uniform distribution. Thus, the first and last terms of (\ref{transition}) are constants. 
The policy term $\pi$, selecting action $a^t$ according to both current state $s^t$ and goal $g$, is implemented with the planning method and is further discussed in Section~\ref{model_planning}. 
%Then, this term can be fully determined by the transition model $p(s^{t+1}|a^{t}, s^{t}, \gG)$ and will be ignored.
Finally, we maximize the likelihood $p(\tau|s^*)$ with the following reformulated ELBO as the objective
\begin{equation}
\begin{aligned}
    \gJ(\theta,\phi) = \mathbb{E}_{ q_{\phi}(\gG|\tau)}\sum_{t=0}^{T-1} \left[ \log p_{\theta}(s^{t+1}|s^{t}, a^{t}, \gG) +  \log \pi_{\theta}(a^{t}|s^{t}, s^{*}, \gG)\right] - \mathbb{D}_{\text{KL}}[q_{\phi}(\gG|\tau) || p(\gG)]
\label{elbo_final}
\end{aligned}
\end{equation}
where $\theta$ is the shared parameter of transition model $p_{\theta}(s^{t+1}|a^{t}, s^{t}, \gG)$ and policy $\pi_{\theta}(a^{t}|s^{t}, s^{*}, \gG)$, and $\phi$ is the parameter of causal graph $q_{\phi}(\gG|\tau)$.
To efficiently solve this optimization problem, we iteratively updates parameter $\phi$ (causal discovery, Section~\ref{causal_discovery}) and parameter $\theta$ (model and policy learning, Section~\ref{model_planning}), as shown in Figure~\ref{fig:framework}.
Intuitively, these processes can be respectively viewed as the discovery of graph and the update of $f_i$, which share tight connections as discussed in Section~\ref{cg_scm}. 
%CG builds the connectivity describing the causality between nodes, while SCM models the specific functions for predicting the value of nodes.

\subsection{Model and Policy Learning}
\label{model_planning}

Let us start with a simple case where we already obtain a $\gG$ and use it to guide the learning of parameter $\theta$ via $\max_{\theta} \gJ(\theta,\phi)$.
Since the KL divergence of $\gJ(\theta,\phi)$ does not involve $\theta$, we only need to deal with the first expectation term, i.e., the likelihood of transition model and policy.
For the transition $p_{\theta}(s^{t+1}|a^{t}, s^{t}, \gG)$, we connect it with causal structure by further defining a particular type of CG and denote it as $\gG$ in the rest of this paper:
\begin{definition}[Transition Causal Graph]
\label{transition_cg}
We define a bipartite graph $\gG$, whose vertices are divided into two disjoint sets $\mathcal{U}= \{\mathcal{A}^{t}, \mathcal{S}^{t} \}$ and $\mathcal{V}=\{ \mathcal{S}^{t+1} \}$. $\mathcal{A}^{t}$ represents action nodes at step $t$, $\mathcal{S}^t$ state nodes at step $t$, and $\mathcal{S}^{t+1}$ the state nodes at step $t+1$. All edges start from set $\mathcal{U}$ and end in set $\mathcal{V}$.
\end{definition}

\textbf{Model learning.} This definition builds the causal graph between two consecutive time steps, which indicates that the values of states in step $t+1$ depend on values in step $t$. It also implies that the interventions~\cite{peters2017elements} on nodes in $\mathcal{U}$ are directly obtained since they have no parent nodes. 
We denote the marginal distribution of $\mathcal{S}$ as $p_{\gI_\pi^s}$, which is collected by RL policy $\pi$.
Combined with the Definition~\ref{def_scm} of SCM, we find that $p_{\theta}(s^{t+1}|a^{t}, s^{t}, \gG)$ essentially approximates a collection of functions $f_j$ following the structure $\gG$, which take as input the values of parents of the state node $s_j$ and outputs the value $s_j$.
Thus, we propose to model the transition corresponding to $\gG$ with a collection of neural networks $f_{\theta}(\gG) := \{ f_{\theta_j} \}_{j=1}^{M}$ to obtain
\begin{equation}
    {s}^{t+1}_j = f_{\theta_j}([ \textbf{PA}_j^{\gG}]^{t}, U_j),
\label{gru_inference}
\end{equation}
where $[\textbf{PA}_j^{\gG}]^{t}$ represents the values of all parents of node $s^{t}_j$ at time step $t$ and $U_j$ follows Gaussian noise $U_j \sim \mathcal{N}(0, \textbf{I})$.
In practice, we use Gated Recurrent Unit~\cite{chung2014empirical} as $f_j$ because it supports varying numbers of input nodes. 
We take $s^{t}_j$ as the initial hidden embedding and the rest parents $[ \textbf{PA}_j^{\gG}\backslash s_j ]^{t}$ as the input sequence to $f_j$.
The entire model is optimized by stochastic gradient descent with the log-likelihood $\log p_{\theta}(s^{t+1}|a^{t}, s^{t}, \gG)$ as objective.

\textbf{Policy learning with planning.}
Then we turn to the policy term $\pi_{\theta}(a^{t}|s^{t}, s^{*}, \gG)$ in $\gJ(\theta,\phi)$. 
We optimize it with planning methods that leverage the estimated transition model.
Specifically, the policy aims to optimize an action-state value function $Q(s^t, a^t)=\mathbb{E}\left[\sum_{t'=0}^H \gamma^{t'} r(s^{t'+t}, a^{t'+t}) | s^t, a^t\right]$, which can be obtained by unrolling the transition model with a horizon of $H$ steps and discount factor $\gamma$.
In practice, we use model predictive control (MPC)~\cite{camacho2013model} with random shooting~\cite{richards2005robust}, which selects the first action in the fixed-horizon trajectory that has the highest action-state value $Q(s^t, a^t)$, i.e. $\hat{\pi}(s^t) = \argmax_{a^t\in \gA} Q_\theta^\gG (s^t, a^t)$. %, which leads to the state value function $V^{\hat{\pi}}(s)=\max_{\hat{\pi}} Q_\theta^\gG (s^t, \hat{\pi}(s^t))$. 
The formulation we derived so far is highly correlated to the model-based RL framework~\cite{wang2019benchmarking}. However, the main difference is that we obtain it with variational inference by regarding the causal graph as a latent variable.
%Planning is crucial for estimating $q_{\phi}(\gG|\tau)$ by influencing the distribution of the interventional data.
%A good policy incorporates goal information in data buffer, therefore shifting the interventional state distribution $p_{\gI_\pi^s}$ towards its optimal goal-related distribution $p_g$, as in Lemma~\ref{TV_distance}.

% reducing the total variation (TV) distance~\cite{janner2019trust} between the interventional state distribution ($p_{\gI_\pi^s}(\cdot)$, in collected data) and its optimal ($p_g(\cdot)$, associated with goal), as is shown in Section~\ref{sec::convergence}.

% We show the detail of this relationship in Appendix~\ref{app:planning}.
% we analyze the relationship between minimizing this distance and maximizing $Q(s^t, a^t)$ in Appendix~\ref{app:planning}.
% This planning algorithm can be replaced by any other counterparts to improve efficiency, especially when the planning horizon is long and the reward is too sparse.

\vspace{-3mm}
\subsection{Data-Efficient Causal Discovery}
\label{causal_discovery}

\begin{wrapfigure}{r}{0.43\textwidth} 
\vspace{-5mm}
\newcommand\mycommfont[1]{\small\ttfamily\textcolor{RoyalBlue}{#1}}
\SetCommentSty{mycommfont}
\begin{algorithm}[H]
\caption{GRADER Training}
\label{algorithm1}
\KwIn{Trajectory buffer $\mathcal{B}_{\tau}$, Causal graph $\gG$, Transition model $f_{\theta}$, causal discovery threshold $\eta$}
%\KwOut{The trained model $\gM_{\phi}$}
\While{$\theta$ not converged}
{
\tcp{Policy from planning} 
Sample a goal $g\sim p_{\text{train}}(g)$ \\
\While{$t < T$}
{
    $a^t \leftarrow$ Planner($f_{\theta}$, $s^t$, $g$) \\
    $s^{t+1}$, $r^t \leftarrow$ Env($a^t$, $g$) \\
    $\mathcal{B}_{\tau} \leftarrow \mathcal{B}_{\tau}\cup\{a^t, s^t, s^{t+1} \} $
}
\tcp{Estimate causal graph} 
\For{$i \leq M + N$}
{
    \For{$j \leq M$}
    {
        Infer edge $e_{ij} \leftarrow q_{\phi}(\cdot|\mathcal{B}, \eta)$
    }
}
\tcp{Learn transition model} 
Update $f_{\theta}(\gG)$ via (\ref{gru_inference}) with $\mathcal{B}$ \\
%$\theta \leftarrow \arg\max_{\theta} f_{\theta}(\gG)$ \\
}
\end{algorithm}
\vspace{-6mm}
\end{wrapfigure}

% convert likelihood maximization to causal discovery
In this step, we relax the assumption of knowing $\gG$ and aim to estimate the posterior distribution $q(\gG|\tau)$ to optimize ELBO~(\ref{elbo_final}) w.r.t. parameter $\phi$.
In most score-based methods~\cite{chickering2002optimal}, likelihood is used to evaluate the correctness of the causal graph, i.e., a better causal graph leads to a higher likelihood.
Since the first term of (\ref{elbo_final}) represents the likelihood of the transition model, we convert the problem of $\max_{\phi} \gJ(\theta, \phi)$ to the causal discovery that finds the true causal graph based on collected data samples.
% convert KLD to sparsity
As for the second term of (\ref{elbo_final}), the following proposition shows that the KL divergence between $q_{\phi}(\gG|\tau)$ and $p(\gG)$ can be approximated by a sparsity regularization~(proof in Appendix~\ref{app:approx_kld}).

\begin{proposition}[KL Divergence as Sparsity Regularization]
With entry-wise independent Bernoulli prior $p(\gG)$ and point mass variational distribution $q(\gG|\tau)$ of DAGs, $\mathbb{D}_{\text{KL}}[q_{\phi} \| p]$ is equivalent to an $\ell_1$ sparsity regularization for the discovered causal graph.
\label{approx_kld}
\end{proposition}

% convert causal discovery to classification
We restrict the posterior $q_{\phi}(\gG|\tau)$ to point mass distribution and use a threshold $\eta$ to control the sparsity.
We perform the discovery process from the classification perspective by proposing binary classifiers $q_{\phi}(e_{ij}|\tau, \eta)$ to determine the existence of an edge $e_{ij}$.
% causal discovery summary
This classifier $q_{\phi}(e_{ij}|\tau, \eta)$ is implemented by statistic \textit{Independent Test}~\cite{chalupka2018fast} and $\eta$ is the threshold for the p-value of the hypothesis. A larger $\eta$ corresponds to harder sparsity constraints, leading to a sparse $\gG$ since two nodes are more likely to be considered independent.
%We first add edges from $s_i^{t}$ to $s_i^{s+1}$ for all $i \in M$ because we assume all initial states influence the change of the state.
According to the definition~\ref{transition_cg}, we only need to conduct classification to edges connecting nodes between $\mathcal{U}$ and $\mathcal{V}$. 
If two nodes are dependent, we add one edge directed from the node in $\mathcal{U}$ to the node in $\mathcal{V}$.
This definition also ensures that we always have $q(\gG|\tau)\in \gQ_{DAG}$, where $\gQ_{DAG}$ is the class of DAG. 
With this procedure, we identify a unique CG $\gG^*$ under optimality: 
% this method can be proved to find the unique graph
% \begin{proposition}[Identifiability]
% \label{uniqueness}
% Given the oracle independent test, which means we have an oracle about whether $X_i \independent X_j, \forall i, j \in [d]$, the true causal graph $\gG^*$ is identifiable, i.e. we can get all true edges $e^{*}_{ij} = q_{\phi}(e_{ij}|\tau), \forall i \in [M+N], j \in [M]$. The complexity of independent test is $\gO((M+N)M)$, where $N$ and $M$ are defined in Assumption~\ref{assumption_factorize}.
% \end{proposition}

\begin{proposition}[Identifiability]
\label{uniqueness}
Given an oracle independent test, with an optimal interventional data distribution $p_{\gI_\pi^s}^*$, causal discovery obtains $\phi^*$ that correctly tells the independence between any two nodes, then the causal graph is uniquely identifiable, with $e^{*}_{ij} = q_{\phi^*}(e_{ij}|\tau), \forall i \in [M+N], j \in [M]$.
%The complexity of independent test is $\gO((M+N)M)$. 
\end{proposition}

In practice, we use $\chi^2$-test for discrete variables and the Fast Conditional Independent Test~\cite{chalupka2018fast} for continuous variables.
The sample size needed for obtaining the oracle test has been widely investigated~\cite{canonne2018testing}. However, testing with finite data is not a trivial problem, as stated in~\cite{shah2020hardness}, especially when the data is sampled from a Goal-conditioned MDP.
Usually, the random policy is not enough to satisfy the oracle assumption because some nodes cannot be fully explored when the task is complicated and has a long horizon. 
To make this assumption empirically possible, it is necessary to simultaneously optimize $\pi_{\theta}(a^t | s^t, s^*, \gG)$ to access more samples close to finishing the task, which is further analyzed in Section~\ref{sec::convergence}.
We also empirically support this argument in Section~\ref{discovery_results} and provide a detailed theoretical proof in Appendix~\ref{app:planning} and~\ref{app:causal_discovery}.

\subsection{Analysis of Performance Guarantee}
\label{sec::convergence}

The entire pipeline of GRADER is summarized in Algorithm~\ref{algorithm1}. To analyze the performance of the optimization of (\ref{elbo_final}), we first list important lemmas that connect previous steps and then show that the iteration of these steps in \textit{GRADER} leads to a virtuous cycle.

By the following lemmas, we show the following performance guarantees step by step. Lemma~\ref{monotonicity} shows model learning is monotonically better at convergence given a better causal graph from causal discovery. 
Then the learned transition model helps to improve the lower bound of the value function during planning according to Lemma~\ref{Value Bound}.
Lemma~\ref{TV_distance} reveals the connection between policy learning and interventional data distribution, which in turn improves the quality of our causal discovery, as is shown in Lemma~\ref{causal_discovery_quality} and Proposition~\ref{uniqueness}.

\begin{lemma}[Monotonicity of Transition Likelihood]
\label{monotonicity}
Assume $\gG^*=(V,E^*)$ be the true CG, for two CG $\gG_1=(V, E_1)$ and $\gG_2=(V, E_2)$, 
% if they have the same graph structure except one edge $e$, and $e$ exist in both $\gG_1$ and the true causal graph $\gG^{*}$, i.e. 
if $\text{SHD}(\gG_1, \gG^*) < \text{SHD}(\gG_2, \gG^*)$, $\exists \ e,\  s.t.\  E_1\cup \{e\} = E_2$, when transition model $\theta$ converges, the following inequality holds for the transition model in (\ref{elbo_final}):
\begin{equation}
    \log p_\theta(s^{t+1}|a^{t}, s^{t}, \gG^*) \geq \log p_\theta(s^{t+1}|a^{t}, s^{t}, \gG_1) \geq \log p_\theta(s^{t+1}|a^{t}, s^{t}, \gG_2)
\label{estep_eq2}
\end{equation}
where SHD is the Structural Hamming Distance defined in Appendix~\ref{proof_monotonicity}. 
% The monotonicity indicates an improvement guarantee in model learning with a better causal graph.
\end{lemma}

\begin{lemma}[Bounded Value Function in Policy Learning]
\label{Value Bound}    
Given a planning horizon $H\to \infty$, if we already have an approximate transition model $\mathbb{D}_{\text{TV}}(\hat{p}(s'|s,a), p(s'|s,a)) \leq \epsilon_m$, the approximate policy $\hat{\pi}$ achieves a near-optimal value function (refer to Appendix~\ref{app:planning} for detailed analysis):
\begin{equation}
    \|V^{\pi^*}(s)-V^{\hat{\pi}}(s)\|_\infty \leq \frac{\gamma}{(1-\gamma)^2} \epsilon_m
\end{equation}% 
\end{lemma}

\begin{lemma}[Policy Learning Improves Interventional Data Distribution]
\label{TV_distance}
With a step reward $r(s, a) = \mathbb{p_g} \mathds{1}(s=g)$, we show that the value function determines an upper bound for TV divergence between the interventional distribution with its optimal (proof details in Appendix~\ref{app:planning}):
\begin{equation}
    \begin{aligned}
    % \argmax_{a\in \gA} \log p(a|s, g) = \argmax_{a\in \gA} Q(s, a) = \argmin_{a\in \gA} \mathbb{D}_{\text{TV}}(p_s(s, a), p_g).
    % V(s) \leq \frac{1 - \mathbb{D}_{\text{TV}}(p_s^\gI, p_g)}{1-\gamma}
    \mathbb{D}_{\text{TV}}(p_{\gI^s_\pi}, p_g) \leq 1 - (1-\gamma) V^\pi (s).
    \end{aligned}
\end{equation}
where $p_{\gI_\pi^s}$ is the marginal state distribution in interventional data, and $p_g$ is the goal distribution.
A better policy with larger $V^\pi(s)$ enforces the distribution of interventional data toward the goal.
% show the equivalence in maximizing the state value function and minimizing the TV distance between interventional distribution with its optimal:
\end{lemma}

\begin{lemma}[Interventional Data Benefits Causal Discovery]
\label{causal_discovery_quality}
For $\epsilon_g=\min_{p_g>0} p_g$, $\mathbb{D}_{\text{TV}}(p_{\gI_\pi^s}, p_g)< \epsilon_g$, the error of our causal discovery is upper bounded with $\mathbb{E}_{\hat{\gG}}[ \text{SHD}(\hat{\gG}, \gG^*)] \leq |\gS|-1$.
\end{lemma}
% By Lemma~\ref{monotonicity}, we firstly show that a better causal graph guarantees to improve the transition model monotonically during model learning.
% Lemma~\ref{Value Bound} demonstrates that a better transition model helps improve the lower bound of value function during planning.
% Lemma~\ref{TV_distance} shows a better action-value function indicates a smaller gap between interventional distribution and optimal interventional distribution, which improves the quality of our causal discovery, as shown in Lemma~\ref{causal_discovery_quality} and Proposition~\ref{uniqueness}.
% The entire pipeline of GRADER is summarized in Algorithm~\ref{algorithm1}.

After the close-loop analysis of our model, we are now able to analyze the overall performance of the proposed framework.
Under the construction of $p_\theta(s^{t+1}|a^t, s^t, \gG)$ with NN-parameterized functions, the following theorem shows that the learning process will guarantee to perform a close estimation of true ELBO under the iterative optimization among model learning, planning, and causal discovery.

\begin{theorem}
\label{convergence}
With T-step approximate transition dynamics $\mathbb{D}_{\text{TV}}\Big(\hat{p}(s'|s,a), p(s'|s,a)\Big) \leq \epsilon_m$, if the goal distribution satisfies $\epsilon_g > \frac{\gamma}{1-\gamma} \epsilon_m$, and the distribution prior CG is entry-wise independent $\text{Bernoulli}(\epsilon_\gG)$, GRADER guarantees to achieve an approximate ELBO $\hat{\gJ}$ with the true ELBO $\gJ^*$:
\begin{equation}
    \|\gJ^*(\theta, \phi) - \hat{\gJ}(\hat{\theta}, \hat{\phi}) \|_{\infty} \leq \left[1 + \frac{\gamma}{(1-\gamma)^2}\right] \epsilon_m  T+ \log \left(\frac{1-\epsilon_\gG}{\epsilon_\gG} \right) (|\gS|-1) ,
\end{equation}
% where $\gJ^*$ is the true ELBO and $C(\gamma, \epsilon_m, T)$ is some constants w.r.t. $\gamma$, $\epsilon_m$, and $T$.
\end{theorem}

An intuitive understanding of the performance guarantee is that a better transition model indicates a better approximation of objective $\gJ$. 
%Yet it's harder for \textit{GRADER} to approximate a longer trajectory with large state space. 
The proof of this theorem and corresponding empirical results are in Appendix~\ref{app:convergence}.

%%%%%%%%%%%%%%%%%%%%%%%%%%%%%%%%%%%%%%%%%%%%%%%%%%
\section{Experiments}
%%%%%%%%%%%%%%%%%%%%%%%%%%%%%%%%%%%%%%%%%%%%%%%%%%

In this section, we first discuss the setting of our designed environments as well as the baselines used in the experiments. Then, we provide the numerical results and detailed discussions to answer the following important research questions: 
\qa{\textbf{Q1.}} Compared to existing strong baselines, how does GRADER gain performance improvement under both in-distribution and generalization settings? 
\qa{\textbf{Q2.}} Compared to an offline random policy, how does a well-trained policy improve the results of causal discovery? 
\qa{\textbf{Q3.}} Compared to score-based causal discovery, does the proposed data-efficient causal discovery pipeline guarantee identifying the true causal graph as stated in Section~\ref{causal_discovery}?
\qa{\textbf{Q4.}} Considering the correctness of causal graphs, how does the imperfect causal graph influence the task-solving performance of GCRL agents?

\begin{wrapfigure}{r}{0.3\textwidth}
\vspace{-5mm}
\centering
\includegraphics[width=0.3\textwidth]{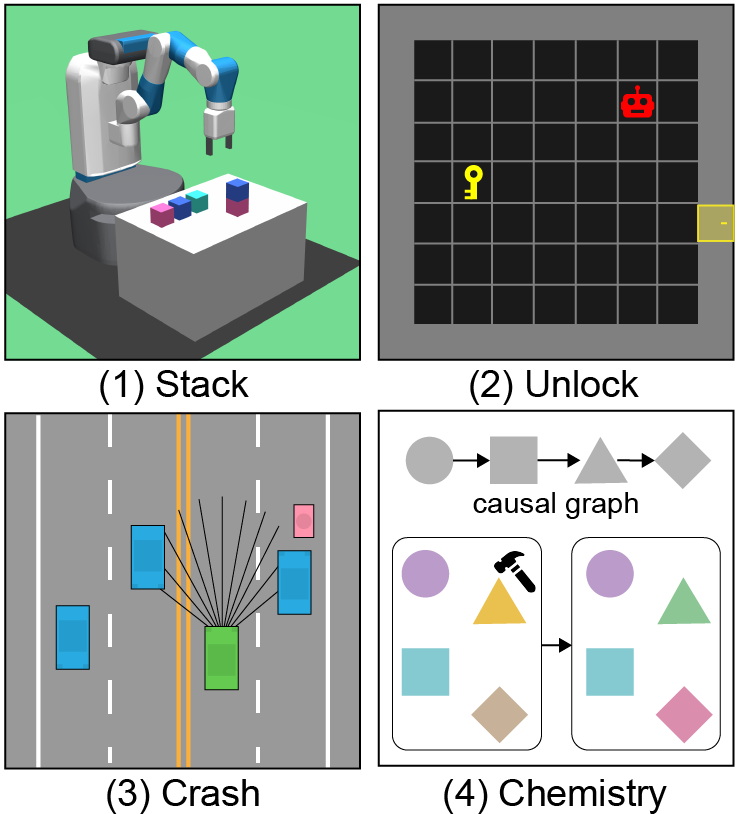}
\vspace{-7mm}
\caption{Environments.}
\label{fig:env}
\vspace{-6mm}
\end{wrapfigure}

\subsection{Environments and Baselines}

Since most commonly used RL benchmarks do not explicitly require causal reasoning for generalization, we design three new environments, which are shown in Figure~\ref{fig:env} (excluding \textit{Chemistry}~\cite{ke2021systematic}).
These environments use the true state as observation to disentangle the reasoning task from visual understanding. 
For each environment, we design three settings -- in-distribution (\textit{I}), spuriousness (\textit{S}), and composition (\textit{C}) -- corresponding to different goal distributions for generalization.
We use $p_{\text{train}}(g)$ and $p_{\text{test}}(g)$ to represent the goal distribution during training and testing, respectively.
$I$ uses the same $p_{\text{train}}(g)$ and $p_{\text{test}}(g)$, $S$ introduces spurious correlations in $p_{\text{train}}(g)$ but remove them in $p_{\text{test}}(g)$, and $C$ contains more similar sub-goals in $p_{\text{test}}(g)$ than in $p_{\text{train}}(g)$.
The details of these settings in are briefly summarized in the following (details in Appendix~\ref{detail_env}):
\begin{itemize}[leftmargin=0.2in]
    \item \textit{Stack}: 
    %Manipulation is important for house-holding and factory assembly.
    We design this manipulation task inspired by the CausalWorld~\cite{ahmed2020causalworld}, where the agent must stack objects to match specific shapes and colors.
    In \textit{Stack-S}, we let the same shape have the same color in $p_{\text{train}}(g)$ but randomly sample the color and shape in $p_{\text{test}}(g)$.
    In \textit{Stack-C}, the maximum number of object is two in $p_{\text{train}}(g)$ but five in $p_{\text{test}}(g)$.
    \item \textit{Unlock}: 
    We design this indoor house-holding task for the agent to collect a key to open doors. This environment is built upon the Minigrid~\cite{gym_minigrid}.
    In \textit{Stack-S}, the door and the key are always in the same row in $p_{\text{train}}(g)$ but uniformly sample in $p_{\text{test}}(g)$.
    In \textit{Unlock-C}, there are one door in $p_{\text{train}}(g)$ but two doors in $p_{\text{test}}(g)$.
    \item \textit{Crash}: 
    The occurrence of accidents usually relies on causality, e.g., an autonomous vehicle (AV) collides with a jaywalker because its view is blocked by another car~\cite{tavares2021language}.
    We design such a crash scenario based on highway-env~\cite{highway-env}, where the goals are to create crashes between a pedestrian and different AVs.
    In \textit{Stack-S}, the initial distance between AV and pedestrian is a constant in $p_{\text{train}}(g)$ but irrelevant in $p_{\text{test}}(g)$.
    In \textit{Stack-C}, there is one pedestrian in $p_{\text{train}}(g)$ but two in $p_{\text{test}}(g)$.
    \item \textit{Chemistry}~\cite{ke2021systematic}: There are 10 nodes with different colors. An underlying causal graph controls the color-changing mechanism of all nodes. In one step, the agent changes the color of one node. The goal is to match the given colors of all nodes. In the spuriousness setting, we let all nodes have the same target color. There is no composition setting in this environment.
\end{itemize}

We use the following methods as our baselines to fairly demonstrate the advantages of GRADER.
\textbf{SAC:}~\cite{haarnoja2018soft} Soft Actor-Critic is a well-known model-free RL method that uses entropy to increase the diversity of action.
\textbf{ICIN:}~\cite{nair2019causal} It uses DAgger~\cite{ross2011no} to learn goal-conditioned policy with the causal graph estimated from the expert policy. We assume it can access the true causal graph for supervised learning.
\textbf{PETS:}~\cite{chua2018deep} We consider the ensemble transition model with random shoot planning as one baseline, which achieves generalization with the uncertainty-aware design.
\textbf{TICSA:}~\cite{wangtask} This is a causal-augmented MBRL method that simultaneously optimizes a soft adjacent matrix (representing the causality) and a transition model.
\textbf{ICIL:}~\cite{bica2021invariant} This method proposes an invariant feature learning structure that captures the implicit causality of multiple tasks. We only use it for transition model learning since the original method is designed for imitation learning.
\textbf{GNN:}~\cite{schlichtkrull2018modeling} Since graph neural networks are good at learning structural information, we implement a GNN-based baseline using Relational Graph Convolutional Network.

\begin{table}[t]
%\vspace{-20}
\caption{Success rate (\%) for nine settings in three environments. \highest{Bold} font means the best.}
\label{tab:overall}
\centering
\small{
  %\begin{tabular}{c|c|p{1.1cm} p{1.3cm} p{1.3cm} p{1.3cm} | p{1.2cm} p{1.2cm} p{1.2cm}}
  \begin{tabular}{l|p{0.9cm} p{0.9cm} p{1.1cm} | p{0.9cm} p{0.95cm} p{1.1cm} | p{0.9cm} p{0.9cm} p{1.0cm}}
  \toprule
\small{Method} & \scriptsize{Stack-I} & \scriptsize{Stack-S} & \scriptsize{Stack-C} & \scriptsize{Unlock-I} & \scriptsize{Unlock-S} & \scriptsize{Unlock-C} & \scriptsize{Crash-I} & \scriptsize{Crash-S} & \scriptsize{Crash-C}  \\
\midrule
\small{SAC} & 34.7\scriptsize{$\pm$16.1} & 22.1\scriptsize{$\pm$14.0} & 31.7\scriptsize{$\pm$5.1} & 0.1\scriptsize{$\pm$0.5} & 0.0\scriptsize{$\pm$0.2} & 0.4\scriptsize{$\pm$1.7} & 22.5\scriptsize{$\pm$17.6} & 18.6\scriptsize{$\pm$8.7} & 6.7\scriptsize{$\pm$3.8}  \\
\small{ICIN} & 71.8\scriptsize{$\pm$6.9} & 71.0\scriptsize{$\pm$7.4} & 58.6\scriptsize{$\pm$8.3} & 31.7\scriptsize{$\pm$9.6} & 32.7\scriptsize{$\pm$8.6} & 31.5\scriptsize{$\pm$8.5} & 27.9\scriptsize{$\pm$6.1} & 15.8\scriptsize{$\pm$17.2} & 7.8\scriptsize{$\pm$8.8}  \\
\small{PETS} & \highest{97.2\scriptsize{$\pm$6.9}} & 77.7\scriptsize{$\pm$13.5} & 73.7\scriptsize{$\pm$10.3} & 59.5\scriptsize{$\pm$7.2} & 20.6\scriptsize{$\pm$5.9} & 28.3\scriptsize{$\pm$10.0} & 52.3\scriptsize{$\pm$11.5} & 44.6\scriptsize{$\pm$12.5} & 37.1\scriptsize{$\pm$5.1}  \\
\small{TICSA} & 85.9\scriptsize{$\pm$8.4} & 88.8\scriptsize{$\pm$10.1} & 76.2\scriptsize{$\pm$8.3} & 58.5\scriptsize{$\pm$12.3} & 33.6\scriptsize{$\pm$14.3} & 29.8\scriptsize{$\pm$8.3} & 68.9\scriptsize{$\pm$5.9} & 56.8\scriptsize{$\pm$8.6} & 15.0\scriptsize{$\pm$8.2}  \\
\small{ICIL} & 93.7\scriptsize{$\pm$5.9} & 81.2\scriptsize{$\pm$14.4} & 62.8\scriptsize{$\pm$13.0} & \highest{67.1\scriptsize{$\pm$11.6}} & 15.9\scriptsize{$\pm$4.7} & 53.6\scriptsize{$\pm$15.3} & 55.3\scriptsize{$\pm$20.9} & 21.7\scriptsize{$\pm$17.7} & 14.3\scriptsize{$\pm$7.3}  \\
{\small{GNN}} & {45.7\scriptsize{$\pm$9.1}} & {39.0\scriptsize{$\pm$10.4}} & {41.7\scriptsize{$\pm$8.6}} & {3.4\scriptsize{$\pm$2.3}} & {3.4\scriptsize{$\pm$2.4}} & {4.5\scriptsize{$\pm$3.0}} & {4.2\scriptsize{$\pm$4.0}} & {5.1\scriptsize{$\pm$5.1}} & {3.8\scriptsize{$\pm$2.8}}  \\
\midrule
\small{Score} & 92.7\scriptsize{$\pm$7.4} & 90.5\scriptsize{$\pm$7.5} & 73.9\scriptsize{$\pm$8.5} & 44.9\scriptsize{$\pm$28.1} & 23.1\scriptsize{$\pm$7.6} & 36.2\scriptsize{$\pm$30.1} & 42.3\scriptsize{$\pm$17.5} & 53.4\scriptsize{$\pm$18.7} & 8.4\scriptsize{$\pm$6.1}  \\
\small{Full} & 92.9\scriptsize{$\pm$6.3} & 86.0\scriptsize{$\pm$9.5} & 75.7\scriptsize{$\pm$10.3} & 63.8\scriptsize{$\pm$9.2} & 18.3\scriptsize{$\pm$7.4} & 53.7\scriptsize{$\pm$14.3} & 69.8\scriptsize{$\pm$14.0} & 52.6\scriptsize{$\pm$12.8} & 42.0\scriptsize{$\pm$17.2}  \\
\small{Offline} & 96.8\scriptsize{$\pm$5.8} & 95.4\scriptsize{$\pm$6.1} & 81.4\scriptsize{$\pm$7.8} & 13.8\scriptsize{$\pm$8.1} & 13.9\scriptsize{$\pm$7.5} & 11.7\scriptsize{$\pm$6.9} & 13.1\scriptsize{$\pm$16.2} & 30.2\scriptsize{$\pm$16.5} & 14.9\scriptsize{$\pm$12.4}  \\
\scriptsize{GRADER} & 95.6\scriptsize{$\pm$5.4} & \highest{97.6\scriptsize{$\pm$6.0}} & \highest{93.7\scriptsize{$\pm$8.4}} & 64.2\scriptsize{$\pm$9.1} & \highest{61.4\scriptsize{$\pm$4.4}} & \highest{82.1\scriptsize{$\pm$9.2}} & \highest{91.5\scriptsize{$\pm$4.4}} & \highest{84.3\scriptsize{$\pm$10.0}} & \highest{84.7\scriptsize{$\pm$7.3}}  \\
\bottomrule
\end{tabular}
}
\end{table}

\subsection{Results Discussion}
\label{discovery_results}

\textbf{Overall Performance (\qa{Q1})}
We compare the testing reward of all methods under nine tasks and summarize the results in Table~\ref{tab:overall} to demonstrate the overall performance. 
Generally, our method outperforms baselines in all tasks except \textit{Stack-I} because this task is too simple for all methods.
We note that the gap between our method and baselines in \textit{S} and \textit{C} settings is more significant than in the \textit{I} setting, showing that our method still works well in the non-trivial generalization task.
As a model-free method, SAC fails in all three tasks of \textit{Unlock} and \textit{Crash} environments since they have very sparse rewards.
Without learning the causal structure of the environment, PETS even cannot fully solve \textit{Unlock-S}, \textit{Unlock-C}, and all \textit{Crash} tasks.
Both TICSA and ICIL learn the causality underlying the task so that they are relatively better than SAC and PETS. However, they are still worse than GRADER in two generalization settings because of the unstable and inefficient causal reasoning mechanism.
We also find that even if ICIN is given the true causal graph, the policy learning part cannot efficiently leverage the causality, leading to worse performance in generalization settings.

To further analyze the tendency of learning, we plot the curves of all methods under \textit{Crash} in Figure~\ref{fig:discover_graph}. Our method quickly learns to solve tasks at the beginning of the training, demonstrating high data efficiency. GRADER also outperforms other methods with large gaps in the later training phase.
The training figures of the other two environments can be found in Appendix~\ref{app:more_results}.

\begin{figure}[t]
    \centering
    \includegraphics[width=1.0\textwidth]{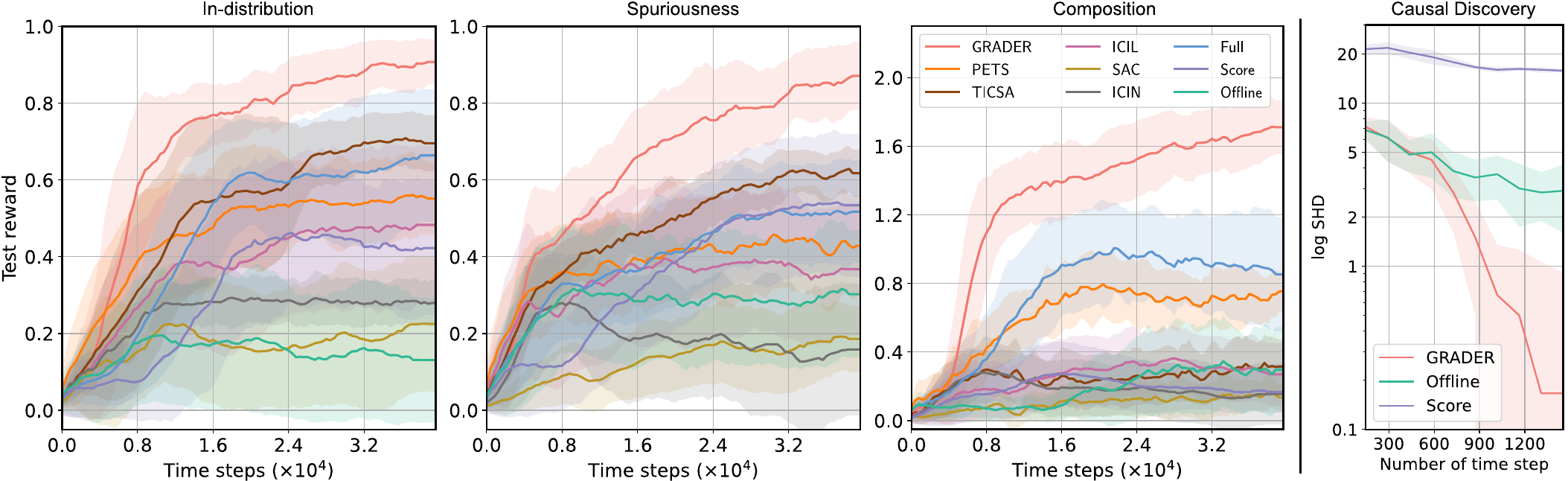}
    \caption{\textbf{Left:} Test reward of the \textit{Crash} environment calculated with 30 trails. \textbf{Right:} The accuracy of causal graph discovery with samples from GRADER, Score, and Offline.}
    \label{fig:discover_graph}
\vspace{-5mm}
\end{figure}

\begin{wrapfigure}{r}{0.35\textwidth}
\vspace{-4mm}
    \centering
    \includegraphics[width=0.35\textwidth]{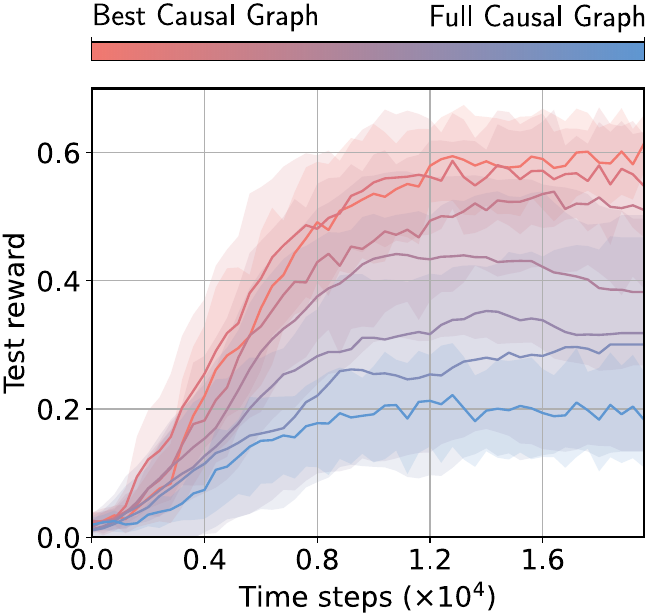}
    \caption{Influence of different causal graphs in \textit{Unlock-S}.}
    \label{fig:influence}
\vspace{-5mm}
\end{wrapfigure}

\textbf{Importance of Policy Learning (\qa{Q2})} As we mentioned in Section~\ref{causal_discovery}, we empirically compare GRADER and \textbf{Offline}~\cite{zhu2022offline}, which uses data collected from offline random policy, and plot results in the right part of Figure~\ref{fig:discover_graph}.
We use SHD~\cite{tsamardinos2006max} to compute the distance between the estimated causal graph and the true causal graph. 
The true causal graph for each environment can be found in Appendix~\ref{app:causal_graph}.
When we only use samples collected offline by random policy, we cannot obtain variables' values that require long-horizon reasoning, e.g., the door can be opened only if the agent is close to the door and has the key. % leading to unreliable discovery.
As a consequence, the causal graph obtained by Offline harms the performance, as shown in Figure~\ref{fig:discover_graph}.
Instead, GRADER gradually explores more regions and quickly obtains the true causal graph when we iteratively discover the causal graph and update the policy.

\textbf{Advantage of Data-efficient Causal Discovery (\qa{Q3})} To show the advantage of proposed constraint-based methods, we design a model named \textbf{Score} that optimizes a soft adjacent matrix using score-based method~\cite{brouillard2020differentiable}, which is recently combined with NN for differentiable training, for example, in TICSA.
According to the discovery accuracy shown in the right part of Figure~\ref{fig:discover_graph}, we find that score-based discovery is inefficient. Based on the performance of the Score model summarized in Table~\ref{tab:overall}, we also conclude that it is not as good as our constraint-based method and has a large variance due to the unstable learning of the causal graph.

\textbf{Influence of Causal Graph (\qa{Q4})}
To illustrate the importance of the causal graph, we implement another variant of GRADER named \textbf{Full}, which uses a fixed full graph that connects all nodes between the sets $\mathcal{U}$ and $\mathcal{V}$.
According to the performance shown in Table~\ref{tab:overall} and Figure~\ref{fig:discover_graph}, we find that the full graph achieves worse results than GRADER because of the redundant and spurious correlation.
Intuitively, unrelated information causes additional noises to the learning procedure, and the spurious correlation creates a shortcut that makes the model extract wrong features, leading to worse results in the spuriousness generalization as shown in Table~\ref{tab:overall}.

We then investigate how the correctness of the causal graph influences the performance. 
We use fixed graphs interpolating from the best causal graph to the full graph to train a GRADER model in \textit{Unlock-S} and summarize the results in Figure~\ref{fig:influence}
The more correct the graph is, the higher reward the agent obtains, which supports our statements in Section~\ref{sec::convergence} that the causal graph is important for the reasoning tasks -- a better causal graph helps the model have better task-solving performance.

\begin{table}[t]
\caption{Discovery results on \textit{Chemistry} environment (GRADER / Score). \highest{Bold} font means the best.}
%\vspace{-20}
\label{tab:chemistry_discovery}
\centering
\small{
  %\begin{tabular}{c|c|p{1.1cm} p{1.3cm} p{1.3cm} p{1.3cm} | p{1.2cm} p{1.2cm} p{1.2cm}}
  \begin{tabular}{l|c|c|c|c}
    \toprule
    \small{Metric} &  \small{Collider}  & \small{Chain} & \small{Jungle} & \small{Full} \\
    \midrule 
    \small{SHD ($\downarrow$)}      & \textbf{3.70\scriptsize{$\pm$1.79}}\small{/15.4}\scriptsize{$\pm$7.03}  
                                    & \textbf{2.80\scriptsize{$\pm$1.83}}\small{/14.0}\scriptsize{$\pm$1.18} 
                                    & \textbf{7.00\scriptsize{$\pm$2.19}}\small{/13.8}\scriptsize{$\pm$0.40}  
                                    & \textbf{2.40\scriptsize{$\pm$1.20}}\small{/11.0}\scriptsize{$\pm$5.31} \\
    %\midrule
    \small{Accuracy ($\uparrow$)}   & \textbf{0.99\scriptsize{$\pm$0.00}}\small{/0.87}\scriptsize{$\pm$0.06}
                                    & \textbf{0.99\scriptsize{$\pm$0.00}}\small{/0.88}\scriptsize{$\pm$0.01} 
                                    & \textbf{0.98\scriptsize{$\pm$0.00}}\small{/0.89}\scriptsize{$\pm$0.00} 
                                    & \textbf{0.99\scriptsize{$\pm$0.00}}\small{/0.91}\scriptsize{$\pm$0.09} \\
    %\midrule
    \small{Precision ($\uparrow$)}  & \textbf{0.90\scriptsize{$\pm$0.05}}\small{/0.73}\scriptsize{$\pm$0.10} 
                                    & \textbf{0.94\scriptsize{$\pm$0.04}}\small{/0.79}\scriptsize{$\pm$0.03} 
                                    & 0.86\scriptsize{$\pm$0.04}\textbf{\small{/0.88}\scriptsize{$\pm$0.01}} 
                                    & 1.00\scriptsize{$\pm$0.00}\small{/1.00}\scriptsize{$\pm$0.00} \\
    %\midrule
    \small{Recall ($\uparrow$)}     & \textbf{0.99\scriptsize{$\pm$0.02}}\small{/0.83}\scriptsize{$\pm$0.07} 
                                    & \textbf{0.96\scriptsize{$\pm$0.03}}\small{/0.73}\scriptsize{$\pm$0.00} 
                                    & \textbf{0.96\scriptsize{$\pm$0.02}}\small{/0.73}\scriptsize{$\pm$0.00} 
                                    & \textbf{0.96\scriptsize{$\pm$0.02}}\small{/0.83}\scriptsize{$\pm$0.17} \\
    %\midrule
    \small{F-score ($\uparrow$)}    & \textbf{0.94\scriptsize{$\pm$0.03}}\small{/0.77}\scriptsize{$\pm$0.06} 
                                    & \textbf{0.95\scriptsize{$\pm$0.03}}\small{/0.76}\scriptsize{$\pm$0.02} 
                                    & \textbf{0.91\scriptsize{$\pm$0.03}}\small{/0.80}\scriptsize{$\pm$0.00} 
                                    & \textbf{0.98\scriptsize{$\pm$0.01}}\small{/0.90}\scriptsize{$\pm$0.10} \\
    \bottomrule
  \end{tabular}
}
\end{table}

\begin{wrapfigure}{r}{0.5\textwidth}
\vspace{-8mm}
    \centering
    \includegraphics[width=0.5\textwidth]{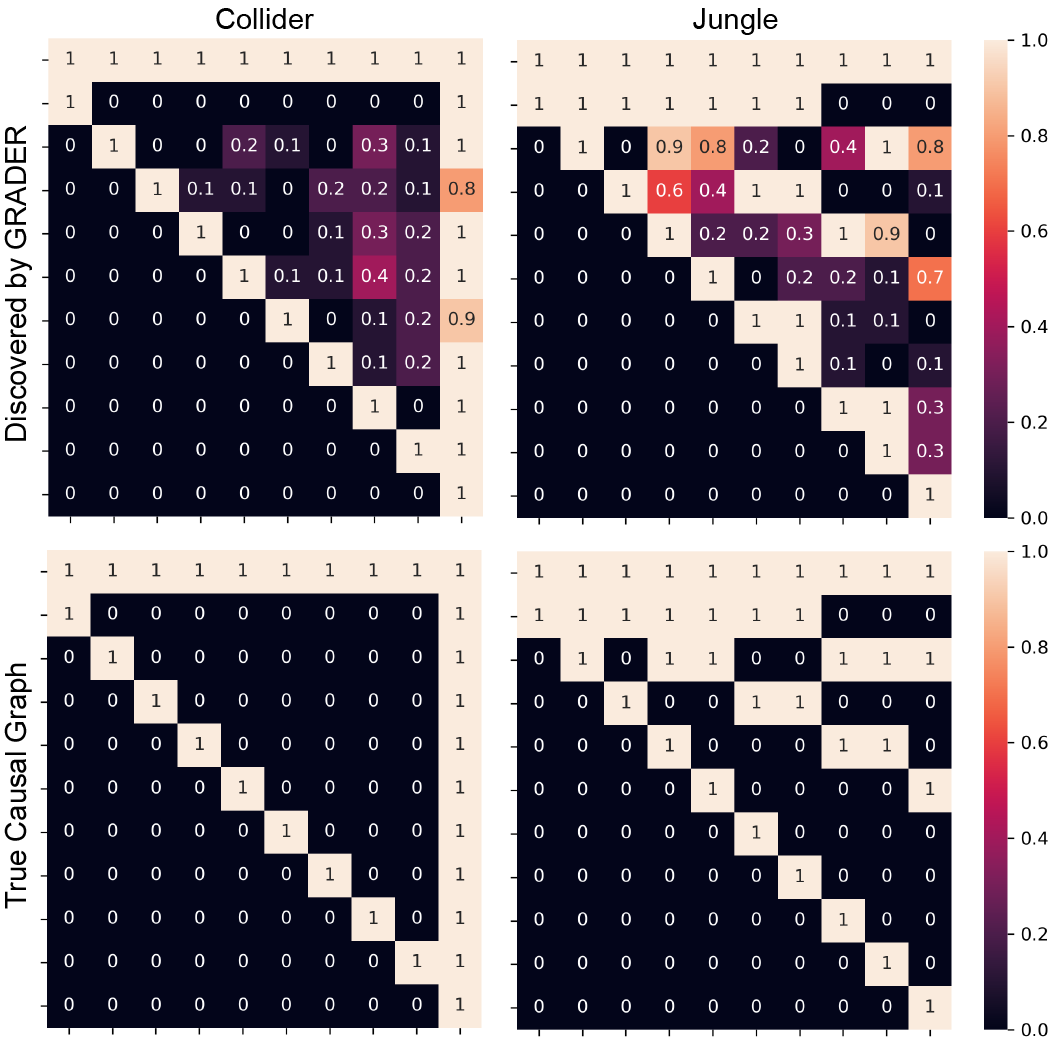}
    \caption{Top: Discovered causal graph from GRADER. Bottom: true causal graph.}
    \label{fig:chemistry_graph}
\vspace{-5mm}
\end{wrapfigure}

\subsection{Further Analysis of Causal Discovery}

Finally, we conduct further analysis of the discovery performance on the \textit{Chemistry} environment~\cite{ke2021systematic}, which is a standard benchmark for evaluating causal discovery methods. In this environment, the colors of nodes are controlled by the causal graph, therefore, finding the true causal graph makes it much easier to achieve the goal that requires matching all target colors.
The agent can discover the graph by doing interventions via interacting with the environment.
We consider four types of causal graphs (\textit{Collider}, \textit{Chain}, \textit{Jungle}, \textit{Full}) with 10 nodes in this experiment. 

The discovery performance is shown in Table~\ref{tab:chemistry_discovery} with five metrics indicating the classification error. We can see that GRADER outperforms the Score method in all 4 types of graphs. 
In Figure~\ref{fig:chemistry_graph}, we show the discovered graphs from GRADER (averaged over 10 seeds) and the true causal graphs of \textit{Collider} and \textit{Jungle} settings. 
We also show that GRADER achieves advantages over other baselines in solving the color-matching downstream task, which can be found with detailed experiment results in Appendix~\ref{app:chemistry_experiment}.

%%%%%%%%%%%%%%%%%%%%%%%%%%%%%%%%%%%%%%%%%%%%%%%%%%
\section{Related Works}
%%%%%%%%%%%%%%%%%%%%%%%%%%%%%%%%%%%%%%%%%%%%%%%%%%

\textbf{RL Generalization}
%The generalization problem in RL is studied from two perspectives: agent and environment. 
From the agent’s view, algorithms focus on actively obtaining the structure of the task or environment. 
Some decompose the given task into sub-tasks~\cite{kipf2019compile, xu2019regression, huang2019neural, lu2021learning} and when they encounter an unseen task, they rearrange those sub-tasks to solve new tasks.
Instead of dividing tasks, Symbolic RL learns a program-based policy consisting of domain language~\cite{yang2021program, han2020neuro} or grammar~\cite{landajuela2021discovering, garnelo2016towards}. The generated program is then executed~\cite{zhao2021proto} to interact with the environment, which has the potential to solve unseen tasks by searching the symbolic space.
From the other view, we can generate environments to augment agents' experience for better generalization. 
One straightforward way is data augmentation of image-based observation~\cite{kostrikov2020image, hansen2021stabilizing, srinivas2020curl, hansen2021generalization, lee2019network, raileanu2021automatic}. When extended to other factors of the environment, \textit{Domain Randomization}~\cite{tobin2017domain} and its variants~\cite{mehta2020active, prakash2019structured} are proposed. Considering the interaction between agent and environment, \textit{Curriculum Learning}~\cite{narvekar2020curriculum} also gradually generates difficult tasks to train generalizable agents.

\textbf{Goal-Conditioned RL (GCRL)}
The generalization problem is naturally related to GCRL~\cite{liu2022goal}, which aims to train an agent for multiple tasks.
From the optimization perspective, Universal Value Function~\cite{schaul2015universal}, reward shaping~\cite{trott2019keeping}, and latent dynamic model~\cite{nair2020goal} are widely used tools to solve GCRL problem.
Sub-goal generation~\cite{florensa2018automatic} is another intuitive idea to tackle the long horizon with sparse reward, where the core thing is to make sure the generated sub-goals are solvable.
Finally, \textit{Hindsight Experience Replay (HER)}~\cite{andrychowicz2017hindsight}, belonging to the relabelling category, is a ground-breaking yet straightforward method that treats visited states as ``fake'' goals when the goal and state share the same space. Later on, improved versions of HER~\cite{ren2019exploration, pitis2020maximum, fang2019curriculum} were widely studied. One limitation is that we cannot directly use a visited state as a goal if the goal has pre-conditions.
Similar to our setting, \cite{tang2021hindsight} and \cite{rudner2021outcome} convert the GCRL problem to variational inference by regarding control as inference~\cite{levine2018reinforcement}. \cite{tang2021hindsight} propose an EM framework under the HER setting and \cite{rudner2021outcome} treats the last state as the goal and estimates a shaped reward during training.

\textbf{RL with Causal Reasoning}
Causality is now frequently discussed in the machine learning field to complement the interpretability of neural networks~\cite{peters2017elements}. 
RL algorithms also incorporate causality to improve the reasoning capability~\cite{madumal2020explainable}.
For instance, \cite{nair2019causal} and \cite{volodin2020resolving} explicitly estimate causal structures with the interventional data obtained from the environment. 
These structures can be used to constraint output space~\cite{ding2021causalaf} or adjust the buffer priority~\cite{seitzer2021causal}. Building dynamic models in model-based RL~\cite{wangtask, wang2022causal, zhu2022offline} based on causal graphs is also widely studied recently.
Implicitly, we can abstract the causal structure and formulate it using the \textit{Block MDP}~\cite{zhang2020invariant} setting or training multiple encoders to extract different kinds of representations~\cite{sodhani2022improving}.
Following the idea of invariant risk minimization~\cite{arjovsky2019invariant}, they assume task-relevant features are invariant and shared across all environments, which can be used as the only cause of the reward.

\textbf{Causal Discovery}
Causal discovery~\cite{glymour2019review} is a long-stand topic in economics and sociology, where the traditional methods can be generally categorized into constraint-based and score-based. 
Constraint-based methods~\cite{spirtes2000causation} start from a complete graph and iteratively remove edges with conditional independent test~\cite{pearson1900x, zhang2012kernel} as constraints.
Score-based methods~\cite{chickering2002optimal, hauser2012characterization} use metrics such as \textit{Bayesian Information Criterion}~\cite{neath2012bayesian} as scores and prefer edges that maximize the score given the dataset.
Recently, researchers extend score-based methods with RL~\cite{zhu2019causal} or differentiable discovery~\cite{brouillard2020differentiable, ke2019learning, li2020causal}. The former selects edges with a learned policy, and the latter learns a soft adjacent matrix with observational or interventional data.
Active intervention methods are also explored~\cite{scherrer2021learning} to increase the efficiency of data collection and decrease the cost of conducting intervention~\cite{lindgren2018experimental}.

%%%%%%%%%%%%%%%%%%%%%%%%%%%%%%%%%%%%%%%%%%%%%%%%%%
\section{Conclusion}
%%%%%%%%%%%%%%%%%%%%%%%%%%%%%%%%%%%%%%%%%%%%%%%%%%

% summarize our method
This paper proposes a latent variable model that injects a causal graph reasoning process into transition model learning and planning to solve GCRL problems under the generalization setting. We theoretically prove that our iterative optimization process can obtain the true causal graph. 
% summarized experiment results
To evaluate the performance of the proposed method, we designed nine tasks in three environments.
The comprehensive experiment results show that our method has better data efficiency and performance than baselines. 
Our method also provides interpretability by the explicitly discovered causal graph. 

% limitation
The main limitation of this work is that the explicit estimation of causal structure does not scale well to the number of nodes. Developing efficient gradient-based discovery methods could be a promising direction.
In addition, the factorized state and action space assumption may restrict the usage of this work to semantic representations, which need to be processed with abstraction methods.
% negative social impact
We further discuss the potential negative social impact and additional limitations in Appendix~\ref{app:social}.

\paragraph{Acknowledgements.}
We gratefully acknowledge support from the National Science Foundation under grant CAREER CNS-2047454.

\newpage
\bibliographystyle{unsrt}
\bibliography{neurips}
\medskip

%%%%%%%%%%%%%%%%%%%%%%%%%%%%%%%%%%%%%%%%%%%%%%%%%%%%%%%%%%%%

\newpage
\section*{Checklist}
\begin{enumerate}
\item For all authors...
\begin{enumerate}
  \item Do the main claims made in the abstract and introduction accurately reflect the paper's contributions and scope?
    \answerYes{}
  \item Did you describe the limitations of your work?
    \answerYes{In the conclusion section.}
  \item Did you discuss any potential negative societal impacts of your work?
    \answerYes{In the Appendix.}
  \item Have you read the ethics review guidelines and ensured that your paper conforms to them?
    \answerYes{}
\end{enumerate}

\item If you are including theoretical results...
\begin{enumerate}
  \item Did you state the full set of assumptions of all theoretical results?
    \answerYes{}
        \item Did you include complete proofs of all theoretical results?
    \answerYes{In the Appendix.}
\end{enumerate}

\item If you ran experiments...
\begin{enumerate}
  \item Did you include the code, data, and instructions needed to reproduce the main experimental results (either in the supplemental material or as a URL)?
    \answerYes{In Supplementary.}
  \item Did you specify all the training details (e.g., data splits, hyperparameters, how they were chosen)?
    \answerYes{In the Appendix.}
        \item Did you report error bars (e.g., with respect to the random seed after running experiments multiple times)?
    \answerYes{In the Appendix.}
        \item Did you include the total amount of compute and the type of resources used (e.g., type of GPUs, internal cluster, or cloud provider)?
    \answerYes{In the Appendix.}
\end{enumerate}

\item If you are using existing assets (e.g., code, data, models) or curating/releasing new assets...
\begin{enumerate}
  \item If your work uses existing assets, did you cite the creators?
    \answerYes{In the Supplementary.}
  \item Did you mention the license of the assets?
    \answerYes{In the Supplementary.}
  \item Did you include any new assets either in the supplemental material or as a URL?
    \answerYes{In the Supplementary.}
  \item Did you discuss whether and how consent was obtained from people whose data you're using/curating?
    \answerYes{In the Supplementary}
  \item Did you discuss whether the data you are using/curating contains personally identifiable information or offensive content?
    \answerYes{In the Supplementary.}
\end{enumerate}

\item If you used crowdsourcing or conducted research with human subjects...
\begin{enumerate}
  \item Did you include the full text of instructions given to participants and screenshots, if applicable?
    \answerNA{}
  \item Did you describe any potential participant risks, with links to Institutional Review Board (IRB) approvals, if applicable?
    \answerNA{}
  \item Did you include the estimated hourly wage paid to participants and the total amount spent on participant compensation?
    \answerNA{}
\end{enumerate}
\end{enumerate}

%%%%%%%%%%%%%%%%%%%%%%%%%%%%%%%%%%%%%%%%%%%%%%%%%%%%%%%%%%%%
%\end{comment}
\clearpage
\appendix

\addcontentsline{toc}{section}{Appendix} % Add the appendix text to the document TOC
\part{Appendix} % Start the appendix part
\parttoc % Insert the appendix TOC
%\tableofcontents

\section{Theoretical Proofs}
\label{proof}

In Appendix~\ref{proof}, we first show the derivation of the latent variable models~\ref{app:lvm}, then provide some analytical results in the iterative optimization of model learning~\ref{app:transition}, planning~\ref{app:planning}, and causal discovery~\ref{app:causal_discovery}. Finally, we give the proof of the theorem of overall performance guarantee~\ref{app:convergence} given some common assumptions.

To compactly write down our formulas, we slightly abuse the notations by representing $s^t, a^t$ as the joint states and actions at timestep $t$, while using $s_i, a_i$ to denote the $i$-th dimension of factorized states or actions. Without the loss of generality, we implement our reward in a deterministic way, which only involves $r(s, g)$ in its notation, we will slightly generalize to some state-action reward function for our analysis as well.

\subsection{Formulation of Latent Variable Models}
\label{app:lvm}
\subsubsection{Derivation of Equation (\ref{elbo})}
\label{app:derivation_elbo}

The ELBO of the likelihood of the trajectory is obtained by
\begin{equation}
\begin{split}
    \log p(\tau|s^*) =& \log \int p(\tau|\gG, s^*) p(\gG| s^*) d\gG \\
    =& \log \int q(\gG|\tau)\frac{p(\tau|\gG, s^*) p(\gG|s^*)}{q(\gG|\tau)} d\gG \\
    \geq & \int q(\gG|\tau)\log \frac{p(\tau|\gG, s^*) p(\gG|s^*)}{q(\gG|\tau)} d\gG \\
    =& \int q(\gG|\tau)\left(\log p(\tau|\gG, s^*) + \log \frac{p(\gG | s^*)}{q(\gG|\tau)} \right) d\gG \\
    =& \int q(\gG|\tau)\log p(\tau|\gG, s^*) d\gG + \int q(\gG|\tau)\log\frac{p(\gG | s^*)}{q(\gG|\tau)}  d\gG \\
    %=&\ \mathbb{E}_{q(\gG|\tau)}[\log p(\tau|\gG, s^*)] - \mathbb{D}_{\text{KL}}[q(\gG|\tau) || p(\gG | s^*)] \\
    =&\ \mathbb{E}_{q(\gG|\tau)}[\log p(\tau|\gG, s^*)] - \mathbb{D}_{\text{KL}}[q(\gG|\tau) || p(\gG)] \\
\end{split}
\end{equation}

where the third line is obtained by Jensen's inequality and the last line is because the prior of the causal graph $\gG$ is independent of the achieved goal $s^*$.

\subsubsection{Derivation of Equation (\ref{transition})}
\label{app:derivation_transition}

According to the decomposition of state-action trajectory
\begin{equation}
    p(\tau) = p(s^{0}) \sum_{t=0}^{T-1} p(s^{t+1}|s^{t}, a^{t}) p(a^{t}|s^{t})
\end{equation}
we can get the following
\begin{equation}
\begin{split}
    \log p(\tau|\gG, s^*) = & \log(s^0, a^0, s^1, a^1, \cdots,a^{T-1}, s^T | \gG, s^*) \\
    =& \log p(s^0|\gG, s^*) + \sum_{t=0}^{T-1} \log p(s^{t+1}|s^{t}, a^{t}, \gG, s^*) + \sum_{t=0}^{T-1} \log p(a^{t}|s^{t}, \gG, s^*) \\ % + \log p(g|s^*, \gG) \\
    =& \log p(s^0) + \sum_{t=0}^{T-1} \log p(s^{t+1}|s^{t}, a^{t}, \gG) + \sum_{t=0}^{T-1} \log p(a^{t}|s^{t}, \gG, s^*)\\ % + \log p(g) \\
\end{split}
\end{equation}

\subsection{Transition Model Learning}
\label{app:transition}
The optimization in the model learning step can be described below:
\begin{equation}
    \begin{aligned}
        \argmax_{\theta}  [\sum_t \log p_\theta(s_{t+1} | s_t, a_t, \gG)]
    \end{aligned}
\end{equation}
where $\tau=[s_1, a_1, \cdots, s_T]$ is the trajectory in data buffer, and $\gG$ is the given causal graph.

Here below, we show some necessary definitions and propositions to prove the Lemma~\ref{monotonicity}. 
\label{proof_monotonicity}

\begin{definition}[Structural Hamming Distance (SHD)]
\label{shd}
For any two DAGs $\gG, \gH$ with identical vertices set $V$, we define the following function SHD: $\gG\times \gH\to \mathbb{R}$, 
\begin{equation}
    \text{SHD}(\gG, \gH) = \#\{(i, j)\in V^2 \mid \gG \text{ and } \gH \text{ have different edges } e_{ij}\}
\end{equation}
\end{definition}

\begin{definition}[Respect the graph]
For any given transition model with specific causal graph $\gG$, the transition model respects the graph if the distribution $p({s}_{t+1}| {a}_{t}, {s}_{t}, \gG)$ can be factorized as:
\begin{equation}
    p({s'}| {s},{a},  \gG) = \prod_{i\in [M]} p(s_i'| \textbf{PA}(s_i'), \gG)
\end{equation}
where $M$ is the total number of factorized states, $\textbf{PA}(\cdot)$ represents the parents in the causal graph.
\end{definition}
\begin{proposition}[GRU model respects the graph] 
\label{app:gru_repsect}
As the parameterized transition model $p_\theta(s'| s, a, \gG)$ reaches the steady state, it respects the graph.
\end{proposition}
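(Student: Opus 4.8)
The plan is to show that ``respecting the graph'' is essentially forced by the generative architecture (\ref{gru_inference}) together with the joint independence of the exogenous noises, so that the real content of the proposition is transferring independence of the noise variables to conditional independence of the predicted next-state coordinates. I would therefore not try to reason about the dynamics of training directly, but instead characterize the conditional law $p_\theta(s'\mid s,a,\gG)$ induced by the frozen (steady-state) model and verify it has the product form stated in the definition of respecting the graph.

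First, I would fix an arbitrary $(s^t,a^t)$ and observe that, by Definition~\ref{transition_cg}, every parent set $[\textbf{PA}_j^{\gG}]^t$ is contained in $\mathcal{U}=\{\mathcal{A}^t,\mathcal{S}^t\}$; hence conditioning on $(s^t,a^t)$ deterministically fixes the value of each $[\textbf{PA}_j^{\gG}]^t$. Under (\ref{gru_inference}), the only remaining randomness in $s^{t+1}$ is then the noise vector $(U_1,\dots,U_M)$, and coordinate $s_j^{t+1}$ is a measurable function of the pair $\big([\textbf{PA}_j^{\gG}]^t, U_j\big)$ alone. Invoking the joint independence of $\{U_j\}_{j\in[M]}$ guaranteed by Definition~\ref{def_scm}, and using that disjoint measurable functions of jointly independent inputs stay independent, I would conclude that $\{s_j^{t+1}\}_{j\in[M]}$ are mutually independent conditional on $(s^t,a^t)$. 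This gives $p_\theta(s'\mid s,a,\gG)=\prod_{j\in[M]} p_\theta(s_j'\mid s,a,\gG)$, and since $s_j'$ depends on $(s,a)$ only through $\textbf{PA}(s_j')$, each factor collapses to $p_\theta\big(s_j'\mid \textbf{PA}(s_j'),\gG\big)$ — exactly the factorization required. The role of the steady-state hypothesis is then to make each per-coordinate Gaussian output head a fixed, time-homogeneous kernel rather than a moving target; I would also note that because the sub-networks $\{f_{\theta_j}\}$ are parameter-disjoint in $j$ and the noises are independent, the log-likelihood already decomposes as $\log p_\theta(s'\mid s,a,\gG)=\sum_{j}\log p_{\theta_j}\big(s_j'\mid \textbf{PA}(s_j')\big)$, so the maximum-likelihood fit decouples over $j$ and the converged model inherits the product structure.

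The step I expect to be the main obstacle is certifying that the recurrent GRU processing of the parent set does not silently couple distinct output coordinates through shared weights or a shared hidden state, so that independence of the $U_j$ genuinely transfers to independence of the outputs. I would handle this by emphasizing that each $f_{\theta_j}$ is evaluated on its own independent noise draw and reads only $[\textbf{PA}_j^{\gG}]^t$ as input, with $s_j^t$ as the initial hidden state: any weight sharing across $j$ affects only the parametric \emph{form} of the conditional, not the set of variables it conditions on, and hence cannot introduce dependence beyond what the shared conditioning variables $(s^t,a^t)$ already carry. Once this is established, the product form follows from the independence argument above, and ``respecting the graph'' holds at the steady state.
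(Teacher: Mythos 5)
Your proof is correct, but it follows a genuinely different route from the paper's. You take the structural equation (\ref{gru_inference}) as the model's definition and show the factorization is architecturally forced: conditioning on $(s,a)$ fixes every parent set, each $s_j'$ is then a measurable function of its own noise $U_j$ alone, and joint independence of the $U_j$ (Definition~\ref{def_scm}) transfers to conditional independence of the output coordinates, giving $p_\theta(s'\mid s,a,\gG)=\prod_{j} p_\theta\big(s_j'\mid \textbf{PA}(s_j'),\gG\big)$. The paper instead argues at the level of the GRU computation itself: it rewrites the network as a message-passing fixed-point iteration $\bm{x}_i^{(\ell)}=F_\theta(\textbf{PA}(\bm{x}_i)^{(\ell-1)},\bm{x}_i^{(\ell-1)})$, invokes conditions from \cite{Gilboa2019DynamicalIA} for a unique equilibrium, identifies ``steady state'' with that equilibrium, and reads off the structural equation $s_i'=F_\theta(\textbf{PA}(s_i'),s_i)$ there, from which the factorization is asserted. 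The two proofs thus interpret the steady-state hypothesis differently: for you it merely freezes the conditional kernels (converged training); for the paper it is the fixed point of the recurrent computation, which is what makes the learned model an SCM in the first place. Your route buys elementarity and makes explicit the probabilistic step the paper leaves implicit (independent per-coordinate noises plus per-coordinate functions imply factorization of the joint conditional law, not just of each marginal); the paper's route buys a derivation of (\ref{gru_inference}) from the actual sequential GRU implementation --- precisely the point your argument takes as given. Your closing remark about weight sharing, together with the fact that the bipartite graph of Definition~\ref{transition_cg} admits no multi-hop paths, makes that assumption harmless here, so this is a difference of emphasis rather than a gap.
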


\begin{proof}[Proof of Proposition~\ref{app:gru_repsect}]
The GRU modules with parameter $\theta=[W, U]$ can be rewritten as a message passing process, where $AGG_{\cdot}(\cdot)$ is the iterative aggregation function.
\begin{equation}
\begin{aligned}
    \text{Node Encoder} &: h_j^{(0)} = f_{\text{encoder}}(x_j)\\
    \text{Aggregation} &: h_j^{(\ell)} = AGG_{i\in \mathcal{N}(j)} (f_\theta(x_j^{(\ell-1)}, h_i^{(\ell-1)}))\\
    \text{Node Decoder} &: \bm{x}_i^{(\ell)} = f_{\text{decoder}} (h_j^{(\ell-1)})
\end{aligned}
\end{equation}
As an iterated process of message passing, where the input causal graph controls the information flow between different entities, this GRU model can be rewritten as a fixed point iteration \cite{Gilboa2019DynamicalIA}:
\begin{equation}
    \bm{x}_i^{(\ell)} = F_\theta (\textbf{PA}(\bm{x}_i)^{(\ell-1)}, \bm{x}_i^{(\ell-1)})
\end{equation}
With proper initialization and some sufficient conditions provided by \cite{Gilboa2019DynamicalIA}, $F$ has a unique equilibrium point, where
\begin{equation}
    \bm{x}_i^{\infty} = F_\theta (\textbf{PA}(\bm{x}_i)^{\infty}, \bm{x}_i^{\infty})
\end{equation}
In our bipartite graph, when GRU reaches the equilibrium point, we can get a structural causal model:
\begin{equation} 
\label{eq::scm}
    s_i' = F_\theta(\textbf{PA}(s_i'), s_i), \quad \text{where } s_i'\in \gS', \textbf{PA}(s_i')\in \gS\cup \gA
\end{equation}
Based on the SCM derivation \ref{eq::scm}, we can then factorize the transition model as:
\begin{equation}\label{eq::factor}
    p_\theta(s'| s, a, \gG) = \prod_{i\in [M]} p_\theta(s_i'| \textbf{PA}(s_i'), \gG)
\end{equation}
We denote the ground truth causal graph as $G^*=(V, E^*)$, and $\textbf{PA}^*(s_i')$ as the true parents of $s_i'$ in $G^*$.
\end{proof}

\begin{definition}[Causal optimality at equlibrium point]
For any $G'\neq G^*$ with at least one pair of flawed parental relationship $\textbf{PA}'(s_i')\neq \textbf{PA}^*(s_i')$, the following inequality holds:
\begin{equation}
    p_\theta(s_i'| \textbf{PA}'(s_i'), \gG) \leq p_\theta(s_i'| \textbf{PA}^*(s_i'), \gG)
\end{equation}
\end{definition}

\begin{lemma}[Local monotonicity] \label{local_monotonic}
    Given one state variable $s_i$ and its any parental relationship $\textbf{PA}^1(s_i), \textbf{PA}^2(s_i)$, if $\#(\textbf{PA}^1(s_i)\cup \textbf{PA}^*(s_i))\geq \#(\textbf{PA}^2(s_i)\cup \textbf{PA}^*(s_i))$, then at steady state, the SCM derived in \ref{eq::scm} will miss part of the message provided from the true parents, therefore $p_\theta(s_i'| \textbf{PA}^1(s_i'), \gG_1) \geq p_\theta(s_i'| \textbf{PA}^2(s_i'), \gG_2)$
\end{lemma}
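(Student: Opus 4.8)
The plan is to reduce the claim to a single factorized coordinate and then control the equilibrium likelihood of that coordinate through the fixed-point structure of the GRU transition model. First I would invoke Proposition~\ref{app:gru_repsect}: at the steady state the transition model respects the graph, so $p_\theta(s'|s,a,\gG)$ factorizes over the state coordinates. Because the two parental relationships $\textbf{PA}^1(s_i)$ and $\textbf{PA}^2(s_i)$ differ only in the in-edges of the single node $s_i'$, every factor other than the one for $s_i'$ is identical across $\gG_1$ and $\gG_2$, and the comparison of the joint likelihoods collapses to the comparison of the single conditional $p_\theta(s_i'\mid \textbf{PA}^k(s_i'),\gG_k)$. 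This is exactly the per-node quantity appearing in the statement, and it is the object I would study for the rest of the proof.

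Next I would make the conditional explicit using the equilibrium SCM in (\ref{eq::scm}). At the unique fixed point $s_i' = F_\theta(\textbf{PA}(s_i'),s_i)$ the reconstructed value of $s_i'$ is a deterministic function of the messages aggregated from the supplied parents, and under the Gaussian noise model of (\ref{gru_inference}) the conditional is Gaussian centred at this reconstruction; hence $\log p_\theta(s_i'\mid \textbf{PA}^k(s_i'),\gG_k)$ equals, up to an additive constant, $-\tfrac{1}{2}\| s_i' - \hat{s}^{(k)} \|^2$, where $\hat{s}^{(k)}$ is the equilibrium reconstruction under the $k$-th parent set. The task then reduces to ordering the two equilibrium reconstruction errors according to how much of the message supplied by the true parents $\textbf{PA}^*(s_i')$ each set retains. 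Since the single-edge structure inherited from Lemma~\ref{monotonicity} (where $E_1\cup\{e\}=E_2$) makes the two parent sets nested, I would argue by cases on the differing edge: if it restores a channel from a parent in $\textbf{PA}^*(s_i')$, the fixed-point map gains access to a dependence it could not previously represent and the attainable error strictly decreases; if it concerns a node outside $\textbf{PA}^*(s_i')$, the Causal optimality at the equilibrium point guarantees the likelihood cannot exceed that achieved with the true parents. Summing this per-node inequality along an edge-by-edge path realizing the SHD gap then lifts the local statement to the global ordering of Lemma~\ref{monotonicity}.

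The main obstacle is the monotonicity claim in the previous step: converting the qualitative ``missing message'' intuition into a genuine inequality between the equilibrium reconstruction errors of two distinct parent sets. The Causal optimality definition only bounds each flawed set against the optimum $\textbf{PA}^*(s_i')$ and does not by itself order two suboptimal sets, so I would need the contraction and uniqueness properties of the fixed-point iteration from~\cite{Gilboa2019DynamicalIA} to ensure that the equilibrium depends continuously and monotonically on the set of available message channels, together with the size-invariance of the GRU aggregation so that a parent set retaining more of the true-parent messages genuinely carries the information of the sparser one. A second delicate point is relating ``retained true-parent message'' to the set cardinality in the hypothesis and confirming that channels outside $\textbf{PA}^*(s_i')$ cannot inflate the equilibrium likelihood, so that the ordering is ultimately governed by agreement with the true parent set rather than by raw parent count.
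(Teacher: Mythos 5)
You should first be aware that the paper contains no proof of Lemma~\ref{local_monotonic}: it is stated with its justification folded into the statement itself (``the SCM \ldots will miss part of the message provided from the true parents, therefore \ldots'') and is then consumed as a black box in the chain (\ref{ineq2}) inside the proof of Lemma~\ref{monotonicity}. So your proposal is not competing with an argument in the paper; it is attempting to supply one that the paper omits. The reduction you set up is sound as far as it goes, and is in fact more explicit than anything in the paper: Proposition~\ref{app:gru_repsect} factorizes the joint likelihood, the comparison collapses to the single conditional for $s_i'$, and under the Gaussian noise of (\ref{gru_inference}) that conditional is governed by a squared reconstruction error at the equilibrium of (\ref{eq::scm}).

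However, the step you yourself flag as the main obstacle is a genuine gap, and your proposed repair does not close it. The fixed-point results you invoke give existence and uniqueness of the GRU equilibrium under contraction conditions; they say nothing about how the equilibrium reconstruction error varies as message channels are added or removed, which is exactly the ordering you need. The paper's ``Causal optimality at equilibrium point'' definition only bounds an arbitrary flawed parent set against $\textbf{PA}^*(s_i')$; it cannot order two suboptimal sets against each other, and that ordering is the entire content of the lemma. So your plan, like the paper, ultimately assumes the inequality it sets out to prove (and the strict decrease you claim when a true-parent channel is restored additionally requires a faithfulness-type assumption that this channel carries information). Two further issues any completed proof must address: (i) your case analysis covers only nested parent sets differing by one edge, whereas the lemma quantifies over arbitrary $\textbf{PA}^1, \textbf{PA}^2$ subject to a cardinality condition; (ii) that condition is itself inconsistent with the lemma's sole application --- in the proof of Lemma~\ref{monotonicity} one has $\textbf{PA}^2(s_j') = \textbf{PA}^1(s_j')\cup\{v\}$ with $v\notin\textbf{PA}^*(s_j')$, hence $\#(\textbf{PA}^2\cup\textbf{PA}^*) > \#(\textbf{PA}^1\cup\textbf{PA}^*)$, so the hypothesis as written (with $\geq$ on unions) would yield the reverse conclusion $p_\theta(s_j'|\textbf{PA}^2(s_j'),\gG_2)\geq p_\theta(s_j'|\textbf{PA}^1(s_j'),\gG_1)$. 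The hypothesis should presumably be restated in terms of overlap with (or symmetric difference from) $\textbf{PA}^*$; your ``delicate point'' remark gestures at this but does not resolve it, and a correct proof cannot proceed until the statement itself is repaired.
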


\begin{proof}[Proof of Lemma~\ref{monotonicity}]
Based on the factorization defined in \ref{eq::factor}, we denote the parental relationship in $\gG_1$ as $\textbf{PA}^1(\cdot)$,
\begin{equation}\label{ineq1}
    p_\theta(s'| a, s, \gG^*) = \prod_{i\in [M]} p_\theta(s_i'| \textbf{PA}^*(s_i'), \gG) \geq \prod_{i\in [M]} p_\theta(s_i'| \textbf{PA}^1(s_i'), \gG) = p_\theta(s'| a, s, \gG_1) 
\end{equation}
For $\gG_1$ and $\gG_2$, suppose the only different edges $e$ has a target node $s_j'$, with $\text{SHD}(\gG_1, \gG^*) < \text{SHD}(\gG_2, \gG^*)$, based on Lemma~\ref{local_monotonic}:
\begin{equation} \label{ineq2}
    \begin{aligned}
    p_\theta(s' | a,  s, \gG_1) & = p_\theta (s_j'|\textbf{PA}^1(s_j'), \gG_1) \prod_{i\in [M]\backslash j}  p_\theta(s_i'| \textbf{PA}^1(s_i'), \gG_1) \\
    & \geq p_\theta (s_j'|\textbf{PA}^2(s_j'), \gG_2) \prod_{i\in [M]\backslash j}  p_\theta(s_i'| \textbf{PA}^1(s_i'), \gG_2) \\
    &  =p_\theta (s_j'|\textbf{PA}^2(s_j'), \gG_2) \prod_{i\in [M]\backslash j}  p_\theta(s_i'| \textbf{PA}^2(s_i'), \gG_2) \\
    &= p_\theta(s' | a,  s, \gG_2).
    \end{aligned}
\end{equation}
Based on the inequality derived in \ref{ineq1} and \ref{ineq2}, 
\begin{equation}
    \log p_\theta(s' |s, a,   \gG^*) \geq \log p_\theta(s' |  s, a, \gG_1) \geq \log p_\theta(s' |  s, a, \gG_2).
\end{equation}
the monotonicity of likelihood in Lemma~\ref{monotonicity} is proved.
\end{proof}

\subsection{Derivation of Planner Module}
\label{app:planning}

The optimization in the planning part is: 
\begin{equation}
    \max_{\pi} \sum_{t=0}^{T-1} \log \pi_\theta(a^{t}|s^t, \gG, s^*) = \max_{[a_0, \cdots, a^{T-1}]} \sum_{t=0}^{T-1} \log \hat{Q}(s^t, a^t)
\end{equation}

Ideally, given the access to real dynamics $p(s'|s, a)$ and goal distribution $p(g)$.
We first define the expected goal-conditioned state-action reward $r(s, a, g) = \mathbb{E}_{s'\sim p(\cdot|s, a)} r(s', g)$, and the expected state-action reward $r(s, a) = \mathbb{E}_{g\sim p_g(\cdot)} r(s, a, g)$.
In practice, due to the inaccuracy of transition model, we can only query the following reward estimation at certain state-action pair: $r(s, a, g) = \mathbb{E}_{s'\sim p_\theta(\cdot|s, a, \gG)} r(s', g), r(s, a) = \mathbb{E}_{g\sim p_g(\cdot)} r(s, a, g)$.

Then we consider the distribution of the goal $g\sim p_g(\cdot)$, which is supported on the state space $\gS$. Based on Algorithm \ref{algorithm1}, our interventional data is collected by the MPC that maximizes the expected discounted cumulative reward from learned dynamics. Thus, we could denote the interventional distribution of state (depending on the current policy $\pi$) in the data buffer as $p_{\gI_\pi^s}$, $s\sim p_{\gI_\pi^s}(\cdot)$ which is also supported on the state space $\gS$.

\begin{proof}[Proof of Lemma~\ref{TV_distance}]
Assume our planning algorithm has an infinite planning horizon, with the optimal transition dynamics and optimal policy, the action-value function $Q^*$ can be expressed as:
\begin{equation}
    \begin{aligned}
        Q^*(s^t, a^t)       &\overset{\text{def}}{=}  \mathbb{E}_{s\sim {p_{\gI_{\pi^*}^s}}(\cdot), a\sim \pi^*(s)} \left[\sum_{t'=t}^{\infty} \gamma^{t'-t} r(s^{t'}, a^{t'}) \mid s^t, a^t \right]\\ 
    \end{aligned}
\end{equation}

The estimation of action value function $\hat{Q}(s^t, a^t) = Q_{\theta, \gG}^{\hat{\pi}}(s^t, a^t)$ can be written as:
\begin{equation}
    \begin{aligned}
        \hat{Q} (s^t, a^t)       &\overset{\text{def}}{=}  \mathbb{E}_{s\sim {p_{\gI_{\hat{\pi}}^s}}(\cdot), a\sim \hat{\pi}(s)} \left[\sum_{t'=t}^{\infty} \gamma^{t'-t} r(s^{t'}, a^{t'}) \mid s^t, a^t \right]\\
        & = \mathbb{E}_{a\sim \hat{\pi}(s)} \left[\sum_{t'=t}^{\infty} \gamma^{t'-t} \mathbb{E}_{g\sim p_g(\cdot), s\sim p_{\gI_{\hat{\pi}}^s}(\cdot)} (1 - \mathds{1}(s'=g)) \mid s^t, a^t \right]\\
    \end{aligned}
\end{equation}

The policy by MPC in algorithm~\ref{algorithm1} can be deducted by: $\hat{\pi}(s^t) = \argmax_{a^t\in \gA} \hat{Q}(s^t, a^t)$, let $s^0=s$, and we could derive value function under the MPC policy as follows: %\todo{to state value function V}
\begin{equation}
    \begin{aligned}
    V(s) & = \sum_{t=0}^{\infty} \gamma^{t}  \mathbb{E}_{p_g} \mathbb{E}_{p_{\gI_\pi^s}} \mathds{1}(s=g) = \sum_{t=0}^{\infty} \gamma^{t}  \mathbb{E}_{p_g} \mathbb{E}_{p_{\gI_\pi^s}} [1 - \mathds{1}(s\ne g)] \\
    & = \sum_{t=0}^{\infty} \gamma^t - \sum_{t=0}^\infty \gamma^t \mathbb{E}_{p_g} \mathbb{E}_{p_{\gI_\pi^s}} \mathds{1}(s\ne g) \\
    & \leq \frac{1 - \mathbb{D}_{TV}(p_{\gI_{\hat{\pi}}^s}, p_g)}{1-\gamma}
    \end{aligned}
\label{eq:distance_bound}
\end{equation}
where $ \mathbb{D}_{\text{TV}}(p_{\gI_\pi^s}, p_g)$ is the total variation distance between the marginal state distribution $p_{\gI_\pi^s}$ in the data buffer, as well as the goal distribution $p_g$, both of which share the same support.
\end{proof}

In addition, we could define a more general form of goal-conditioned reward as based on the distance: $r(s, g) = 1- d(s', g)$. Where $\mathbb{D}$ is a (normalized) distance measure between two vectors in the state space, s.t. $\forall s, g\in \gS$, $0\leq d(s, g) \leq 1$. 
For instance, if we pick $d(s,g) = \mathds{1}(s\ne g)$, the derived reward under this distance measure will go back to the reward function defined in section~\ref{FactorMDP}.
By defining a (normalized) $\ell_p$ distance between s' and g, $d(s, g) = \frac{\|s-g\|_p}{\max_{s_1, s_2\in \gS} \|s_1-s_2\|_p}$, we can also shape a continuous form of goal-conditioned step reward $r(s, g)$ between 0 and 1. Notice that all the Euclidean-based distances are all valid metrics with symmetry, non-negativity, the identity of indiscernibles, and the triangle inequality. With such a definition, the estimated value function will fit in with:
\begin{equation}
    \begin{aligned}
    V(s) \leq \frac{1 - \gW(p_{\gI_\pi^s}, p_g)}{1-\gamma}
    \end{aligned}
\end{equation}

where $\gW$ is some Wasserstein distance between marginal state distribution and goal distribution. Therefore, optimizing the Q value is equivalent to minimizing an upper bound for some types of the statistical distance between goal distribution and target distribution.

For the term related to policy in (\ref{elbo_final}), we can define the goal-conditioned policy distribution as:
\begin{equation}
    \pi(a^t | s^t, g) \propto \exp(Q(s^t, a^t)) %= p(g | s^t, a^t)
\end{equation}
As a result, $\argmax_{\pi} \sum_{t=0}^{T-1} \log \pi_\theta(a^{t}|s^t, s^*, \gG) = \argmax_{\pi} \sum_{t=0}^{T-1} Q(s^t, a^t) $ 
However, the real $Q(s^t, a^t)$ is intractable, so we alternatively optimize the $\hat{Q}(s^t, a^t)$ at each time step. Next, we'll start to derive a bound between $\hat{Q}(s, \hat{\pi}(s))$ and $Q(s, \pi^*(s))$

\begin{proof}[Proof of Lemma~\ref{Value Bound}]
For simplicity, we denote the learned transition function $\hat{p}(s'|s, a) = p_\theta(s'|s, a, \gG)$, which is $\epsilon_m$-approximate dynamics, $\mathbb{D}_{\text{TV}}(\hat{p}, p) = \|\hat{p}(s'|s, a)-p(s'|s, a)\|_\infty \leq \epsilon_m$, 

    Firstly, we show by value iteration that the estimated value function $\hat{V}(s)$ will converge to $V(s)$:
    Assume exists $K>0$, s.t. $\forall k>K$, $\|\hat{p}(s'|,s,a)-p(s,|s,a)\|_\infty \leq \epsilon_m$
    \begin{equation}
        \begin{aligned}
            \hat{V}^{(k+1)}(s) = r(s, \pi(s)) + \gamma \sum_{s'} p(s'|s, \pi(s)) \hat{V}^{(k)}(s).
        \end{aligned}
    \end{equation}
        
    Given the result of the Bellman Contraction, 
    \begin{equation}
        \|\hat{V}^{(k+1)}(s) - V^{\pi^*}(s)\|_\infty \leq \gamma \|\hat{V}^{(k)}(s) - V^{\pi^*}(s)\|_\infty, \quad \lim_{k\to \infty} \|\hat{V}^{(k+1)}(s)-V^{\pi^*}(s)\|_\infty = 0.
    \end{equation}

    Based on the definition of greedy policy in planning: $\hat{\pi}(s) = \argmax_{a\in \gA} \hat{Q}(s, a)$, we can derive the inequality:
    \begin{equation}
    \begin{aligned}
         r(s, \hat{\pi}(s)) + \gamma \sum_{s'} \hat{p}(s'|s, \hat{\pi}(s)) \hat{V}(s')  & \geq r(s, \pi^*(s)) + \gamma  \sum_{s'} \hat{p}(s'|s, \pi^*(s)) \hat{V}(s')  \\
        \Longrightarrow\quad   r(s, \pi^*(s)) - r(s, \hat{\pi}(s))  &\leq \gamma \left[ \sum_{s'}  \hat{p}(s'|s, \hat{\pi}(s))\hat{V}(s') - \sum_{s'} \hat{p}(s'|s, \pi^*(s)) \hat{V}(s') \right]\\
        \stackrel{\|\hat{V}(s)-V^{\pi^*}(s)\|_\infty\to 0}{\Longrightarrow}  r(s, \pi^*(s)) - r(s, \hat{\pi}(s)) &\leq \gamma \left[\sum_{s'}  \hat{p}(s'|s, \hat{\pi}(s)) V^{\pi^*}(s') - \sum_{s'}\hat{p}(s'|s, \pi^*(s)) V^{\pi^*}(s') \right].
    \end{aligned}
    \end{equation}

    Then let $s$ be the state with the largest value error.
    \begin{equation}\label{eq:value_iter}
    \begin{aligned}
         V^{\pi^*}(s) - V^{\hat{\pi}}(s) = & r(s, \pi^*(s)) - r(s, \hat{\pi}(s)) \\
        & + \gamma \Big[\sum_{s'} p(s'|s,\pi^*(s))V^{\pi^*}(s') -\sum_{s'}  p(s'|s,\hat{\pi}(s))  V^{\hat{\pi}}(s') \Big] \\
        \leq & \gamma \sum_{s'}  \Big[\hat{p}(s'|s, \hat{\pi}(s)) V^{\pi^*}(s') -\hat{p}(s'|s, \pi^*(s)) V^{\pi^*}(s') \Big] \\
        & + \gamma \sum_{s'}\Big[p(s'|s,\pi^*(s)) V^{\pi^*}(s')-p(s'|s,\hat{\pi}(s)) V^{\hat{\pi}}(s') \Big] \\
        = & \gamma \sum_{s'} \Big[ p(s'|s,\pi^*(s))-\hat{p}(s'|s,\pi^*(s)) \Big] V^{\pi^*}(s') \\
        & - \gamma \sum_{s'}\Big[p(s'|s,\hat{\pi}(s))-\hat{p}(s'|s,\hat{\pi}(s))  \Big] V^{\pi^*}(s') \\
        & + \gamma \sum_{s'}p(s'|s,\hat{\pi}(s) ) \Big[V^{\pi^*}(s) - V^{\hat{\pi}}(s) \Big] \\
    \end{aligned}
    \end{equation}

    Since $r(s, g)\in [0, 1]$, the value function $V(s)\in [0, \frac{1}{1-\gamma}]$, also by $\|\hat{p}(s'|s,\hat{\pi}(s))-p(s'|s,\hat{\pi}(s))\|\leq \epsilon$, we have
    \begin{equation}
    \begin{aligned}
       V^{\pi^*}(s)- V^{\hat{\pi}}(s) & \leq \gamma \epsilon_m (V_{\text{max}}-V_{\text{min}}) + \gamma \sum_{s'}p(s'|s,\hat{\pi}(s) ) \Big[V^{\pi^*}(s) - V^{\hat{\pi}}(s) \Big] \\ 
       & = \frac{\gamma\epsilon_m}{1-\gamma} + \gamma \sum_{s'}p(s'|s,\hat{\pi}(s) ) \left[V^{\pi^*}(s')- V^{\hat{\pi}}(s') \right].
    \end{aligned}
    \end{equation}
    We already analyzed the state $s$ with the largest value error, and it's sufficient to show:
    \begin{equation}
        \begin{aligned}
        \|V^{\pi^*}(s) - V^{\hat{\pi}}(s) \|_\infty &\leq \frac{\gamma\epsilon_m}{1-\gamma} + \gamma \sum_{s'}p(s'|s,\hat{\pi}) \|V^{\hat{\pi}}(s) - V^{\pi^*}(s) \|_\infty \\
        & =  \frac{\gamma\epsilon_m}{1-\gamma} + \gamma \|V^{\pi^*}(s) - V^{\hat{\pi}}(s)  \|_\infty
        \end{aligned}
    \end{equation}
    By combining $\|V^{\pi^*}(s) - V^{\hat{\pi}}(s) \|_\infty$ on both sides, we have
    \begin{equation}
        \begin{aligned}
        \|V^{\pi^*}(s) - V^{\hat{\pi}}(s) \|_\infty \leq \frac{\gamma}{(1-\gamma)^2} \epsilon_m
        \end{aligned}
    \end{equation}
\end{proof}

\subsection{Causal Discovery}
\label{app:causal_discovery}
\subsubsection{Assumptions of Causality}
\label{app:causal_assumption}

\begin{assumption}[Markov property]
Given a DAG $\gG$ and a joint distribution $P_{\bm{X}}$, this distribution is said to satisfy 
\begin{itemize}[leftmargin=0.2in]
    \item (i) the global Markov property with respect to the DAG $\gG$ if 
        \begin{equation}
            \bm{A} \independent_{\gG} \bm{B} | \bm{C} \Rightarrow  \bm{A} \independent \bm{B} | \bm{C} 
        \end{equation}
        for all disjoint vertex sets $\bm{A}, \bm{B}, \bm{C}$. The symbol $independent_{\gG}$ denotes d-separation. 
    \item (ii) the local Markov property with respect to the DAG $\gG$ if each variable is independent of its non-descendants (without its parents) given its parents, and 
    \item (iii) the Markov factorization property with respect to the DAG $\gG$ if
        \begin{equation}
            p(\bm{x}) = p(x_1,\dots,x_d) = \prod_{j=1}^{d} p(x_j|\textbf{PA}^\gG (x_{j}))
        \end{equation}
        where we assume that $P_{\bm{X}}$ has a density $p$.
\end{itemize}
\end{assumption}

\begin{assumption}[Faithfulness]
Consider a distribution $P_{\bm{X}}$ and a DAG $\gG$, $P_{\bm{X}}$ is faithful to the DAG $\gG$ if we know
\begin{equation}
    \bm{A} \independent \bm{B} | \bm{C} \Rightarrow \bm{A} \independent_{\gG} \bm{B} | \bm{C}
\end{equation}
for all disjoint vertex sets $\bm{A}, \bm{B}, \bm{C}$.
\end{assumption}

\subsubsection{KL Divergence as Sparsity Regularization}
\label{app:approx_kld}

Similar to the assumption in Factorized MDP, the existence of a causal relationship between two arbitrary entities among $x$ is can also be treated as independent. Therefore, we construct the prior distribution $p(\mathcal{G})$ as independent Bernoulli Distribution in the transition causal graph.
\begin{equation}
p(\mathcal{G})=\prod_{i\in [M+N], j\in [M]} p(\mathcal{G}_{ij})=\prod_{i\in [M+N], j\in [M]} p_{ij}
\end{equation}

where $\mathcal{G}_{ij}$ represents the edge from i-th node in source node set $\mathcal{U}=\{\mathcal{A} \cup \mathcal{S}\}$ to the j-th node in target node set $\mathcal{V}=\{\mathcal{S}'\}$ in the bipartite transition causal graph.

On the other hand, for the variational posterior $q(\mathcal{G}|\tau)$, for the discovered transition causal graph, it needs to satisfy two constraints: (i) $q({\mathcal{G}|\tau})$ needs to be a DAG, denoted $\mathcal{Q}_{DAG}$, and more specifically, a bipartite graph. We denote such subset of DAG as $\mathcal{Q}_{Bi}$, (ii) $q({\mathcal{G}|\tau})$ needs to be as sparse as possible. 

Common score-based causal discovery works use two regularization terms, DAGNess and $\ell_1$ regularization to constrain the discovered causal graph in the constraint set, while in our work, we explicitly constrain the posterior variational distribution $ q(\mathcal{G}|\tau)\in \mathcal{Q}_{Bi}\subset \mathcal{Q}_{DAG}$. We then show in the following section that by defining a certain independent Bernoulli prior $p(\mathcal{G})$, the KL divergence between variational posterior $q(\mathcal{G}|\tau)$ and $p(\mathcal{G})$ can be equivalent to a sparsity regularization.

According to our constraint-based causal reasoning modules, $\mathcal{Q}_{Bi}$ consists of $M(M+N)$ independent binary classifiers (that form a DAG) parameterized by our kernel-based independent testing modules $\phi$, i.e. 
\begin{equation}
q_\phi(\mathcal{G}|\tau)=\prod_{i\in [M+N], j\in [M]} q_\phi(\mathcal{G}_{ij}|\tau)\triangleq \prod_{i\in [M+N], j\in [M]} q_{ij}
\end{equation}

\begin{proof}[Proof of Proposition~\ref{approx_kld}]
Let the prior $p_{ij}=\epsilon_{\gG}\in (0, \frac{1}{2}], \forall i\in [M+N], j\in [M]$, 
based on the definition above, the KL divergence term in (\ref{elbo_final}) can be expanded as follows:
\begin{equation}
\begin{aligned}
\mathbb{D}_{\text{KL}}(q_{\phi}(\mathcal{G}|\tau)\|p(\mathcal{G}))  &= \sum_{q\in \mathcal{Q}_{Bi}} \prod_{i,j}q_{ij} \log \frac{\prod_{i,j}q_{ij}}{\prod_{i,j}p_{ij}} =
\sum_{i, j} \left[q_{ij} \log \frac{q_{ij}}{p_{ij}} + (1-q_{ij}) \log \frac{1-q_{ij}}{1-p_{ij}} \right] \\
& = \sum_{i,j} \left[q_{ij} \log \frac{q_{ij}}{\epsilon_{\gG}} + (1-q_{ij}) \log \frac{1-q_{ij}}{1-\epsilon_{\gG}} \right] \\
& = \sum_{i,j} \left[q_{ij}\log q_{ij} + (1-q_{ij})\log (1-q_{ij}) -  q_{ij}\log\epsilon_{\gG} - (1-q_{ij})\log (1-\epsilon_{\gG}) \right]
\end{aligned}
\end{equation}
Since $q_{ij}\in \{0, 1\}$, $\lim_{q_{ij}\to 0} q_{ij}\log q_{ij}=\lim_{q_{ij}\to 1} (1-q_{ij})\log (1-q_{ij})=0, $
\begin{equation}
\begin{aligned}
    \mathbb{D}_{\text{KL}}(q_{\phi}(\mathcal{G}|\tau)\|p(\mathcal{G})) &= \sum_{i,j} \left[-q_{ij} \log (\epsilon_{\gG}) - (1-q_{ij}) \log (1-\epsilon_{\gG}) \right] \\
    &= \sum_{i,j} \left[ - \mathbb{I}(q_{ij}=1) \log \epsilon_{\gG} -\mathbb{I}(q_{ij}=0) \log (1-\epsilon_{\gG})\right] \\
    & = \sum_{i,j} \left[-(1-\mathbb{I}(q_{ij}=1)) \log (1-\epsilon_{\gG}) - \mathbb{I}(q_{ij}=1) \log \epsilon_{\gG}\right] \\
    & = \log \frac{1-\epsilon_{\gG}}{\epsilon_{\gG}} \sum_{i,j} \mathbb{I}(q_{ij}=1))   -\sum_{i,j} \log(1-\epsilon_{\gG}) \\
    & = \log \left(\frac{1-\epsilon_{\gG}}{\epsilon_{\gG}}\right) |q_\phi(\gG|\tau)|_1 + \text{const} \\
    & \triangleq \eta |q_\phi(\gG|\tau)|_1 + \text{const}
\end{aligned}
\end{equation}

Therefore, the KL divergence term is equivalent to an $\ell_1$ sparsity regularizer in score-based causal discovery~\cite{brouillard2020differentiable}. The strength of this regularizer $\eta=\log\left(\frac{1-\epsilon_{\gG}}{\epsilon_{\gG}}\right)\in [0, \infty)$.
The larger $\epsilon_{\gG}$ in prior Bernoulli distribution indicates the smaller strength of this sparsity regularizer (e.g. when $\epsilon_{\gG}=\frac{1}{2}, \eta=0$). 
In the implementation of data-efficient causal discovery, we adjust the parameter of the classifier to set the strength of this sparsity constraint.
\end{proof}

\subsubsection{Unique Identifiability of Causal Graph} \label{proof_uniqueness}
We construct our causal model based on the Factorized MDP in Assumption \ref{assumption_factorize}. According to the definition, the causal graph is a directed bipartite graph, with $s^{t}, a^{t}$ on the source side, and $s^{t+1}$ on the target side. For the theoretical analysis part in Section \ref{proof_uniqueness} and \ref{proof_monotonicity}, we denote $s^{t+1}$ as $s'$, $s_t$ as $s$, $a_t$ as $a$ and $\bm{x}=\{\gA \cup \gS \cup \gS'\}$, $\bm{x}\in \mathbb{R}^{2M+N}$ for simplicity.

\begin{definition}[Interventional Family $\gI$]
For any DAG $\gG$, we define the interventional family $\gI = (I_1, I_2, \cdots, I_K)$. Here $I_1:=\emptyset$ corresponds to the pure observational setting. The joint distribution for the interventional family can be rewritten as:
\begin{equation}\label{eq::intervention}
    p^{(k)}(x_1, \cdots, x_{[2M+N]}) = \prod_{j\notin I_k} p_j^{(1)} (x_j | \textbf{PA}^\gG(x_j)) \prod_{j\in I_k} p_j^{(k)} (x_j | \textbf{PA}^\gG(x_j))
\end{equation}
\end{definition}

\begin{definition}[]
For a specific DAG $\mathcal{G}$, we define $\mathcal{M}(\mathcal{G})$ to be the set of strictly positive densities $p: \mathbb{R}^{2|\mathcal{S}|+|\mathcal{A}|} \to \mathbb{R}$ which satisfies:
\begin{equation}\label{eq::positive}
    p(x_1, \cdots, x_{[2M+N]}) = \prod_{j\in [2M+N]} p_j (x_j | \textbf{PA}^\gG(x_j))
\end{equation}
where $\int_{\mathcal{X}_j} f_j(x_j|\textbf{PA}^\gG(x_j)) d x_j= 1$ for all $\textbf{PA}^\gG(x_j)\in \mathcal{X}_j$ and all $j\in [2M+N]$.
\end{definition}

\begin{definition}[]
For a specific DAG $\mathcal{G}$ and an interventional family $\gI$, 
we define 
\begin{equation}
    \mathcal{M}_\gI(\mathcal{G}):=\{[p^{(k)}]_{k\in [K]}\mid \forall k\in [K], p^{(k)}\in \gM(\gG), \forall j\notin I_k,  p_j^{(k)}(x|\textbf{PA}^\gG(x)) = p_j^{(1)}(x|\textbf{PA}^\gG(x))\}
\end{equation}
Such set of functions is conherent with condition of strictly positive densities in (\ref{eq::positive}) as well as factorization of interventional distribution in (\ref{eq::intervention}).
\end{definition}

\begin{definition}[$\gI$-Markov Equivalence Class, $\gI$-MEC]
Two DAGs $\gG_1$ and $\gG_2$ are $\gI$-Markov equivalent iff $\gM_\gI(\gG_1) = \gM_\gI(\gG_2)$. We denote by $\mathcal{I}-{MEC}(\gG_1)$ the set of all DAGs which are $\gI$-Markov equivalent to $\gG_1$, this is the $\gI$-Markov equivalence class of $\gG_1$.
\end{definition}

\begin{lemma}[Sufficient and Necessary Conditions for $\gI$-MEC, Yang el. al. \cite{Yang2018CharacterizingAL}]
    \label{lemma::mec}
    Suppose the interventional family $\gI$ is such that $\gI_1 := \emptyset$. Two DAGs $\gG_1$ and $\gG_2$ are $\gI$-Markov equivalent iff their I-DAGs $\gG_{\gI_1}$ and $\gG_{\gI_2}$ share the same skeleton and v-structures.
\end{lemma}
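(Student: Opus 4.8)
The plan is to reduce the interventional characterization to the classical observational Markov-equivalence theorem of Verma and Pearl by passing to an augmented graph. The first step is to make the \emph{I-DAG} construction explicit: given a DAG $\gG$ and the interventional family $\gI=(I_1,\dots,I_K)$ with $I_1=\emptyset$, I would form $\gG^{\gI}$ by adjoining a context node $\zeta_k$ for each $k$ with $I_k\neq\emptyset$, drawing an edge $\zeta_k\to j$ for every $j\in I_k$, and retaining all the original edges of $\gG$. Intuitively, the context nodes switch the mechanisms of the intervened targets on and off, so that the family $[p^{(k)}]_{k\in[K]}$ can be repackaged as a single joint distribution over $(\bm{X},\{\zeta_k\})$ in which the context nodes jointly index the regime. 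Note that the factorization defining $\gM_{\gI}(\gG)$ preserves $\textbf{PA}^{\gG}(x_j)$ across regimes, so this is exactly the soft-intervention case in which $\zeta_k$ enters merely as an additional parent.

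Second, I would prove the pivotal correspondence: a collection $[p^{(k)}]\in\gM_{\gI}(\gG)$ if and only if its repacked joint distribution over $(\bm{X},\{\zeta_k\})$ is (observationally) Markov with respect to $\gG^{\gI}$. The forward direction rests on the invariance clause $p_j^{(k)}=p_j^{(1)}$ for every $j\notin I_k$ in the definition of $\gM_{\gI}(\gG)$; this is precisely the statement that $X_j$ is conditionally independent of the context nodes given $\textbf{PA}^{\gG}(x_j)$, which is the d-separation relation read off from $\gG^{\gI}$ for any $j$ with no incoming context edge. The reverse direction recovers the interventional factorization from the augmented Markov factorization by conditioning on each regime. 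Establishing this correspondence carefully — checking that strict positivity is preserved, that no additional edges are forced, and that each regime-conditional recovers exactly one $p^{(k)}$ — is the technical heart of the argument.

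Third, with the correspondence in hand, $\gI$-Markov equivalence of $\gG_1$ and $\gG_2$, i.e. $\gM_{\gI}(\gG_1)=\gM_{\gI}(\gG_2)$, becomes equivalent to ordinary Markov equivalence of the augmented graphs $\gG_1^{\gI}$ and $\gG_2^{\gI}$ over the enlarged vertex set $\bm{V}\cup\{\zeta_k\}$. Applying the Verma--Pearl theorem to these two DAGs then shows they are Markov equivalent precisely when they share the same skeleton and the same v-structures, which is exactly the claimed condition on the I-DAGs.

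The main obstacle I anticipate is the second step, and specifically controlling the v-structures that involve the context nodes. Since $\zeta_k$ is a common parent of every target in $I_k$, the augmentation can create or destroy immoralities at the intervened nodes, so I would need to show that these context-induced v-structures encode exactly the interventional-invariance constraints and no spurious ones; this is what guarantees that observational equivalence of $\gG_1^{\gI}$ and $\gG_2^{\gI}$ is neither strictly stronger nor weaker than the intended $\gI$-Markov equivalence. The Markov and faithfulness assumptions of Appendix~\ref{app:causal_assumption}, together with the hypothesis $I_1=\emptyset$ (which anchors a common observational regime for the non-intervened conditionals), are what make this bookkeeping go through.
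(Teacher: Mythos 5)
First, a point of comparison: the paper does not prove this lemma at all. It is imported verbatim from Yang et al.\ \cite{Yang2018CharacterizingAL} and used as a black box inside the proof of Proposition~\ref{uniqueness}, so there is no internal proof to measure your proposal against; the relevant benchmark is the argument in the cited source, whose I-DAG construction (intervention vertices $\zeta_k$ with edges into the targets $I_k$) is indeed the same device you set up in your first step.

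The reduction itself, however, has a genuine gap, concentrated exactly where you locate the ``technical heart.'' Your pivotal correspondence repackages a tuple $[p^{(k)}]_{k\in[K]}\in\gM_{\gI}(\gG)$ as a single joint distribution over $(\bm{X},\{\zeta_k\})$, with the context nodes ``jointly indexing the regime.'' But regime-indexing context nodes are one-hot, hence deterministically dependent on one another, while in the augmented DAG $\gG^{\gI}$ the $\zeta_k$ are mutually nonadjacent and therefore d-separated; the repackaged joint thus violates the Markov property of $\gG^{\gI}$ (and strict positivity) from the start, so the claimed equivalence ``tuple in $\gM_{\gI}(\gG)$ iff joint Markov w.r.t.\ $\gG^{\gI}$'' is false as stated. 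This is precisely why Yang et al.\ never form a joint distribution: they define an $\gI$-Markov property directly for \emph{tuples}, reading d-separation statements involving the intervention vertices as invariance constraints of the form $p^{(k)}(A\mid C)=p^{(1)}(A\mid C)$. Second, even if the correspondence were repaired (say with a single categorical regime node), Verma--Pearl characterizes equality of the \emph{full} sets of Markov distributions of $\gG_1^{\gI}$ and $\gG_2^{\gI}$, whereas $\gI$-Markov equivalence is equality of restricted subsets. Equality of full sets passes to the subsets, giving the ``same skeleton and v-structures $\Rightarrow$ equivalent'' direction, but the converse --- the direction actually needed for the identifiability claim --- requires constructing, from any witness separating the full model sets, a witness tuple separating the restricted sets. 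That construction is the substantive content of the cited theorem, and it is where the hypothesis $I_1=\emptyset$ (conservativeness of the family) is genuinely used, not merely as an ``anchor'' for the observational regime. A minor additional point: faithfulness plays no role in this statement, since it concerns equality of model sets rather than properties of any particular distribution, so invoking Appendix~\ref{app:causal_assumption} there is a red herring.
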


\begin{proof}[Proof of Proposition \ref{uniqueness}]
In the bipartite graph $(\gU, \gV, E)$, for the discovered graph $\hat{\gG}$ that is in the $\gI$-Markov equivalence class of the ground truth causal graph, $\hat{\gG}$ is unique.

Based on the Lemma~\ref{lemma::mec}, all possible $\hat{\gG}$ that are $\gI$-Markov equivalent will share an identical skeleton with $G^*$, so we consider only graphs obtained by reversing edges in $\hat{\gG}$.   

Due to the bipartite nature of the transition causal graph defined in Definition \ref{transition_cg}, for all the v-structured colliders $c\in \mathcal{C}$, we know that $c\in \gS'$, therefore, reversing any edge of $\hat{\gG}$ will harm the immorality of $\hat{\gG}$, and the new graph will no longer be an $\gI$-MEC to $\gG^*$ Therefore, $\hat{\gG}$ is the only graph in the $\gI$-MEC of $\gG^*$,  i.e. $\hat{\gG}=\gG^*$.
\end{proof}

\subsubsection{Causal Discovery Benefits from Policy Learning}
In this section, we would like to show how the learned GCRL model could aid the performance of causal discovery.
Before we put the formal proof, we first list several assumptions which is quite common in causal discovery literatures~\cite{pmlr-v119-addanki20a}.

\begin{assumption}[Oracle Conditional Independence Test]
\label{app:oracle}
% \todo{How to describe oracle in mathematical forms?}
The conditional independent test could tell the independence between any two variables in the causal graph.
\end{assumption}

\begin{proof}[Proof of Lemma~\ref{causal_discovery_quality}]
Given the oracle conditional independence test in appendix~\ref{app:oracle}, if the state distribution covers all the support of goal distribution, with abundant actions from action space, we can cover all the connections between the current state and the next states.

When $\mathbb{D}_{\text{TV}}(p_{\mathcal{I}_\pi^s}, p_g)< \epsilon_g$, it is sufficient to derive that $p_{\mathcal{I}_\pi^s}(s)>0, \forall s\in \text{Supp}_g$, where $\text{Supp}_g$ is the support set of the goal distribution $p_g$. 

We then discuss the three possible circumstances under such condition: % \todo{(to be concretized:)}
\begin{itemize}
    \item Case 1: Both source state and target state distribution in the buffer are fully supported on $\text{Supp}_g$. In this case, given our Assumption~\ref{app:oracle} and abundant samples, our causal discovery $q_\phi(\mathcal{G}|\tau)$ will correctly classify all the edges in the transition causal graph.
    \item Case 2: Only the target state distribution is supported on $\text{Supp}_g$,  while the source side leaves away from the goal node. In this case, the independent tests may not be able to distinguish the (in)dependence relationship between goal nodes and other state nodes,  $\text{SHD}(\hat{\mathcal{G}}, \mathcal{G}^*)\leq |\mathcal{S}|-1$.
    \item Case 3: Only source state is supported on $\text{Supp}_g$, while the target side leaves away from the goal node. This case corresponds to the case where some initial states hit the goal, while the learned transition model and policy fail to guide the future states to the goal. The causal discovery model $\phi$ may falsely classify the edges from all the source states (except the source goal state) towards the target goal states. Thus $\text{SHD}(\hat{\mathcal{G}}, \mathcal{G}^*)\leq |\mathcal{S}|-1$.
\end{itemize}

In conclusion, for all the three cases that satisfy  $\mathbb{D}_{\text{TV}}(p_{\mathcal{I}_\pi^s}, p_g)\leq \epsilon_g$, we have
\begin{equation}
    \max_{\hat{\mathcal{G}} \sim q_\phi(\cdot|\tau) } \Big[\text{SHD}(\hat{\mathcal{G}}, \mathcal{G}^*)\Big]\leq |\mathcal{S}|-1
\end{equation}
thus 
\begin{equation}
\mathbb{E}_{\hat{\mathcal{G}} \sim q_\phi(\cdot|\tau) } \Big[\text{SHD}(\hat{\mathcal{G}}, \mathcal{G}^*)\Big] \leq \max_{\hat{\mathcal{G}} \sim q_\phi(\cdot|\tau) } \Big[\text{SHD}(\hat{\mathcal{G}}, \mathcal{G}^*)\Big]\leq |\mathcal{S}|-1
\end{equation}

\end{proof}

\subsection{Overall Performance Guarantee of Iterative Optimization}
\label{app:convergence}

Based on all the derivation from previous sections, we finally give out the proof of the overall performance of our proposed iterative optimization in \textit{GRADER}.

\begin{proof}[Proof of Theorem~\ref{convergence}]
Let $d_{max}=\max_{s_1, s_2\in \mathcal{S}} \|s_1-s_2\|^2, d_\theta=\|\hat{s}'(\theta)-s'\|^2$,  then the log likelihood term becomes
\begin{equation}
    p_\theta(s'|s,a)\propto \exp(d_{max}-d_\theta)
\end{equation}

In the model learning part, since we take the log space, we have $\log p_\theta(s'|s, a)=(d_{max}-d_\theta) - C$. 
We neglect the constant term $C$ when deriving the bound.
Without the loss of generality, we set $p_\theta(s'|s,a)= \exp(d_{max}-d_\theta), \log p_\theta(s'|s, a) = d_{max}-d_\theta$ in (\ref{elbo_final}). As $d_{max}-d_\theta \geq 0$, we have the Lipchitz $L\leq 1$ of log function, 
\begin{equation}
    \begin{aligned}
        \Big\|\log \hat{p}(s'|s, a) - \log p(s'|s, a) \Big\|_\infty & \leq  L \Big\|\hat{p}(s'|s,a) - p(s'|s,a)\Big\|_\infty \\
        & \leq \Big\|\hat{p}(s'|s,a) - p(s'|s,a)\Big\|_\infty \leq \epsilon_m
    \end{aligned}
\end{equation}

Based on Lemma~\ref{Value Bound} that is derived in Appendix~\ref{app:planning}, we have the policy learning term 

\begin{equation}
    \begin{aligned}
    \Big\|\log \hat{\pi}(a|s, g) - \log \pi^*(a|s, g)] \Big\|_\infty & = \Big\|\hat{Q} (s, \hat{\pi}(s)) -  Q (s, \pi^*(s)) \Big\|_\infty \\
    & = \Big\|Q (s, \hat{\pi}(s)) -  Q (s, \pi^*(s)) \Big\|_\infty \\
    & = \Big\|V^{\hat{\pi}} (s) -  V^{\pi^*} (s) \Big\|_\infty \\
    & \leq \frac{\gamma}{(1-\gamma)^2} \epsilon_m
    \end{aligned}
\end{equation}

For the KL divergence term, if the goal distribution satisfies $\epsilon_g > \frac{\gamma}{1-\gamma}\epsilon_m$, the following conditions hold:

\begin{equation}
    V^{\hat{\pi}}(s) > V^{\pi^*}(s) - \frac{\gamma}{(1-\gamma)^2} \epsilon_m
\end{equation}
According to Lemma~\ref{TV_distance} and the condition that $V(s)\in [0, \frac{1}{1-\gamma}]$,
\begin{equation}
\begin{aligned}
    \mathbb{D}_{\text{TV}} (p_{\gI_\pi^s}, p_g)  & \leq 1 - (1-\gamma) V^{\hat{\pi}}(s) \\
    & < 1 - (1-\gamma) V^{\pi^*}(s) + \frac{\gamma}{1-\gamma}\epsilon_m \\
    & = (1-\gamma^{t^*-1}) + \epsilon_g \stackrel{t^*=1}{=} \epsilon_g
\end{aligned}
\end{equation}
where $t^*$ is the shortest time step to reach the goal. Here we assume $t^*=1$ for optimal policy in the theoretical design part, while in practice, the bound may get loosened when larger $t^*$ or smaller $\gamma$.

Since $\mathbb{D}_{\text{TV}} (p_{\gI_\pi^s}, p_g)\leq \epsilon_g$, according to Lemma~\ref{causal_discovery_quality} proved in Appendix~\ref{app:causal_discovery}, we have
\begin{equation}
    \begin{aligned}
    \Big\|\mathbb{D}_{\text{KL}}(q_\phi|| p)-\mathbb{D}_{\text{KL}}(q_\phi^*|| p) \Big\|_\infty & = \Big\|\log \left(\frac{1-\epsilon_\mathcal{G}}{\epsilon_\mathcal{G}} \right) \|q_\phi(\mathcal{G}|\tau)\|_1-\log \left(\frac{1-\epsilon_\mathcal{G}}{\epsilon_\mathcal{G}} \right) \|q_\phi^*(\mathcal{G}|\tau)\|_1 \Big\|_\infty \\
    & = \log \left(\frac{1-\epsilon_\mathcal{G}}{\epsilon_\mathcal{G}} \right) \Big\| \|q_\phi(\mathcal{G}|\tau)\|_1-\|q_\phi^*(\mathcal{G}|\tau)\|_1 \Big\|_\infty \\
    & \leq \log \left(\frac{1-\epsilon_\mathcal{G}}{\epsilon_\mathcal{G}} \right) \Big \|q_\phi(\mathcal{G}|\tau)-q_\phi^*(\mathcal{G}|\tau)\Big \|_\infty  \\
    & = \log \left(\frac{1-\epsilon_\mathcal{G}}{\epsilon_\mathcal{G}} \right) \max_{\gG} \Big[\text{SHD}(\mathcal{G}, \mathcal{G}^*)\Big]\\
    & \leq \log \left(\frac{1-\epsilon_\mathcal{G}}{\epsilon_\mathcal{G}} \right) \Big(|\mathcal{S}|-1 \Big)
    \end{aligned}
\end{equation}

Finally, we can derive the overall performance guarantee as follows:

\begin{equation}
    \begin{aligned}
    \|\gJ^*(\theta, \phi) - \hat{\gJ}(\hat{\theta}, \hat{\phi})\|_\infty  = &\   \Big\| \sum_{t=0}^{T-1}\Big\{ \Big[\log \hat{p}(s^{t+1}|s^t, a^t) - \log p(s^{t+1}|s^t, a^t)\Big] \\
    & + \Big[\log \hat{\pi}(a^{t}|s^t, s^*) - \log \pi^*(a^{t}|s^t, s^*)\Big]  \Big\} + \Big[\mathbb{D}_{\text{KL}} (\hat{q}_\phi || p) -  \mathbb{D}_{\text{KL}} (q^*_\phi || p)\Big]  \Big\|_\infty \\ 
    \leq &\  \sum_{t=0}^{T-1}\Big\{ \Big\| \log \hat{p}(s^{t+1}|s^t, a^t) - \log p(s^{t+1}|s^t, a^t)\Big\|_\infty \\
    & + \Big\| \log \hat{\pi}(a^{t}|s^t, s^*) - \log \pi^*(a^{t}|s^t, s^*)\Big\|_\infty \Big\} + \Big\|  \mathbb{D}_{\text{KL}} (\hat{q}_\phi || p) -  \mathbb{D}_{\text{KL}} (q^*_\phi || p)\Big\|_\infty \\
    \leq & \  \sum_{t=0}^{T-1} \Big(\epsilon_m + \frac{\gamma}{(1-\gamma)^2}\epsilon_m \Big) + \log \left(\frac{1-\epsilon_\mathcal{G}}{\epsilon_\mathcal{G}} \right) \Big(|\mathcal{S}|-1 \Big) \\
    =&\ \Big[ 1 + \frac{\gamma}{(1-\gamma)^2} \Big] \epsilon_m T + \log \left(\frac{1-\epsilon_\mathcal{G}}{\epsilon_\mathcal{G}} \right) \Big(|\mathcal{S}|-1 \Big)
    \end{aligned}
\end{equation}
\end{proof}

\section{Additional Experiment}

\subsection{Overall Performance}
\label{app:more_results}

The overall performance results corresponding to the Table~\ref{tab:overall} for \textit{Stack} and \textit{Unlock} environments are shown in Figure~\ref{fig:app_reward_stack} and Figure~\ref{fig:app_reward_unlock}.

\begin{figure}[h]
    \centering
    \includegraphics[width=1.0\textwidth]{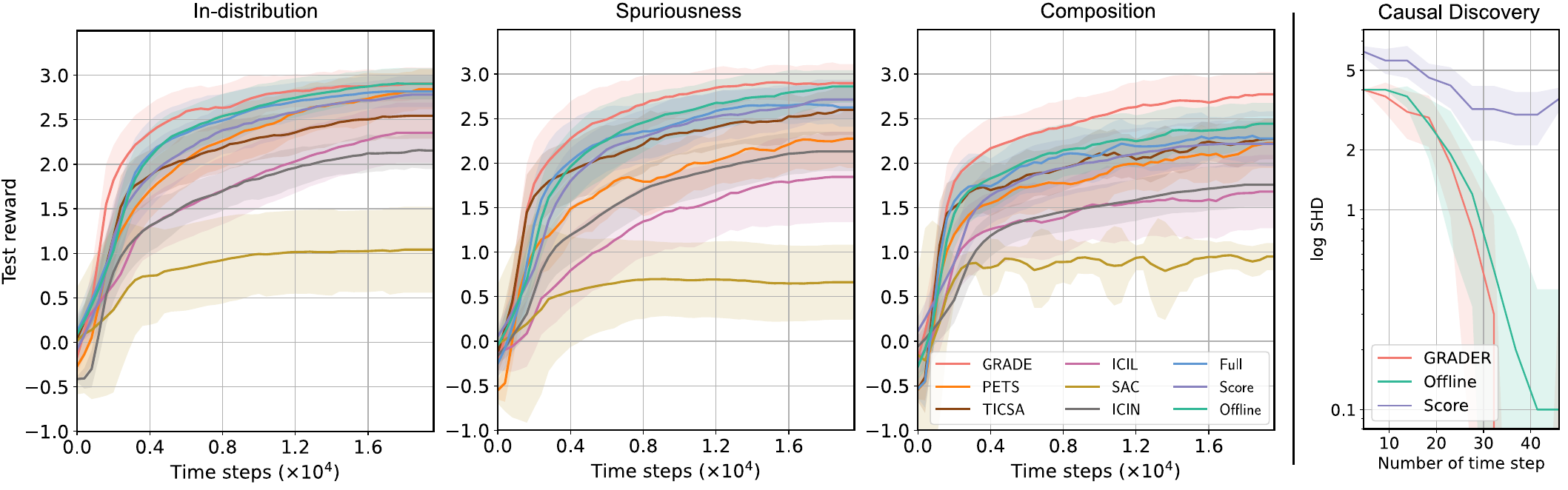}
    \caption{The testing reward and causal discovery results of \textit{Stack} environment.}
    \label{fig:app_reward_stack}
\end{figure}

\begin{figure}[h]
    \centering
    \includegraphics[width=1.0\textwidth]{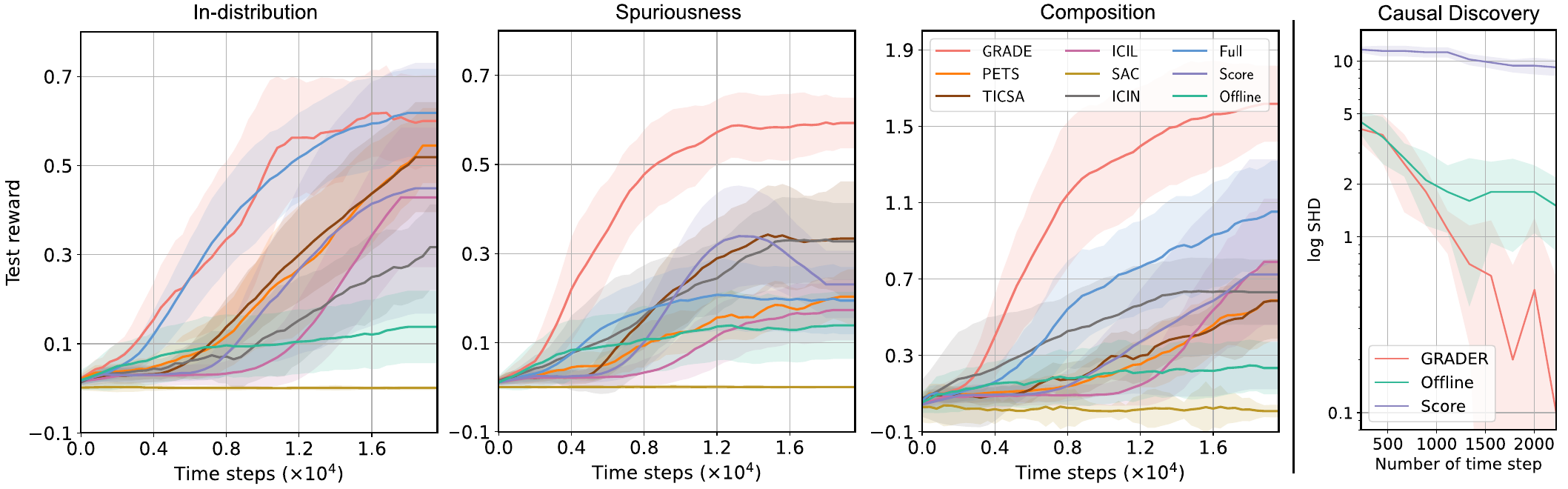}
    \caption{The testing reward and causal discovery results of \textit{Unlock} environment.}
    \label{fig:app_reward_unlock}
\end{figure}

In all \textit{Stack} experiments, we find that the advantage of GRADER over other methods is small. The reason is that this task is simple and the true causal graph only contains 7 nodes as shown in Figure~\ref{app:causal_graph}. 
Due to the simple causal graph, even the Offline random policy can obtain the true causal graph, thus there is almost no difference between the discovery efficiency between GRADER and Offline as shown in the right part of Figure~\ref{fig:app_reward_stack}.

In the \textit{Unlock-I} experiment, there is no gap between GRADER and Full, which means the causal graph may not have many contributions to solving this task. However, there are large gaps in \textit{Unlock-S} and \textit{Unlock-C} settings since indicating that the causal graph helps the model obtain better generalizable performance. 
As for the Offline method, since the causal graph is wrongly discovered, the performance is bad in all three settings.

\begin{figure}[t]
    \centering
    \includegraphics[width=1.0\textwidth]{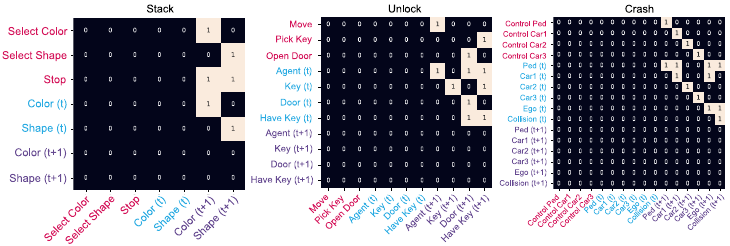}
    \caption{Discovered causal graphs of three environments. Color meaning: \color{magenta}{Action}\color{black}{,} \color{cyan}{State}\color{black}{,} \color{violet}{Next state}.}
    \label{fig:app_causal_graph}
\end{figure}

\begin{wrapfigure}{r}{0.35\textwidth}
\vspace{-10mm}
    \centering
    \includegraphics[width=0.35\textwidth]{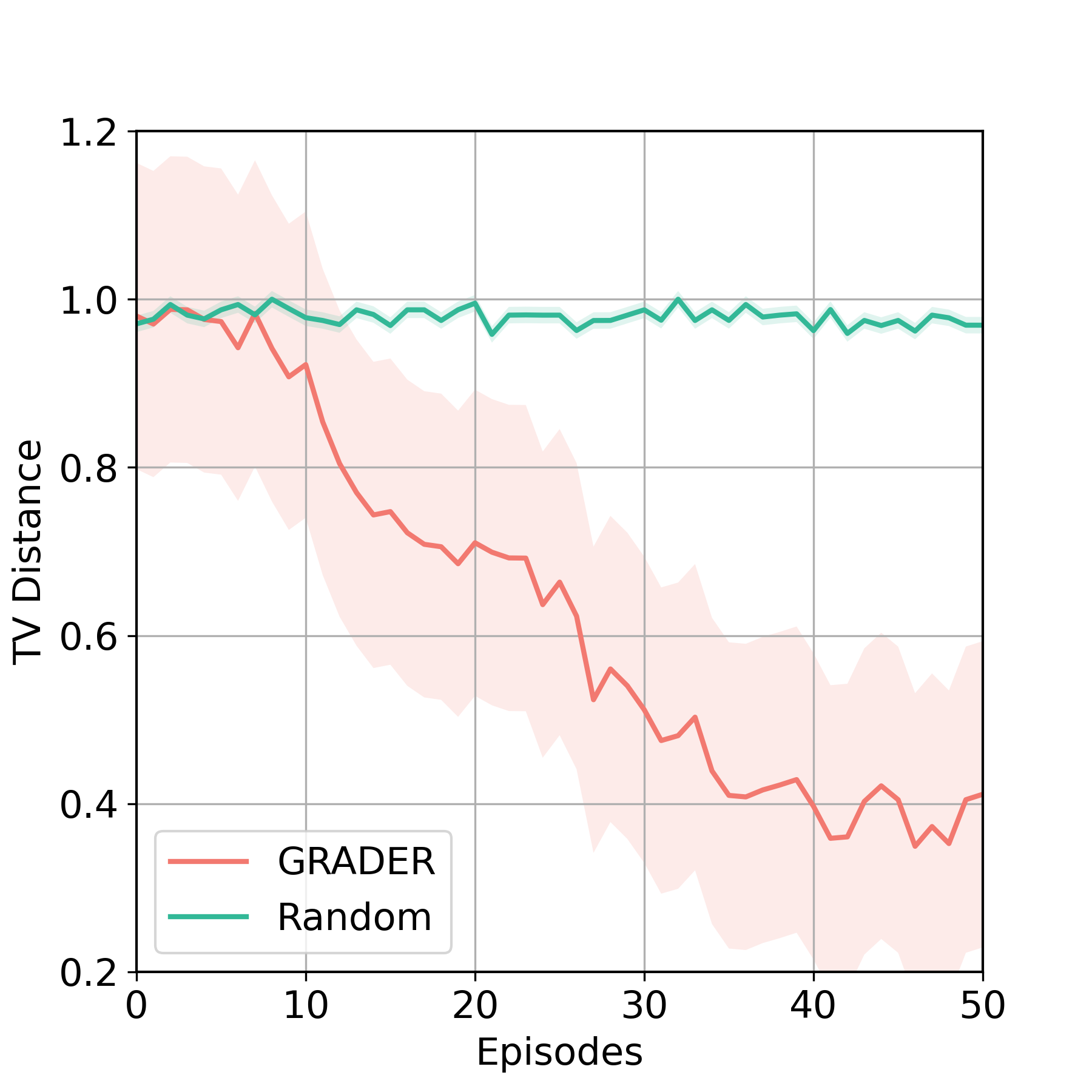}
    \caption{TV distance between goal and state distributions.}
    \label{fig:app_tv_distance}
\vspace{-11mm}
\end{wrapfigure}

\subsection{Causal Graph Analysis}
\label{app:causal_graph}

Since the environments we designed have clear and explicit causality, we can get the true causal graph with human analysis. 
We plot the true causal graphs corresponding to the three environments in Figure~\ref{fig:app_causal_graph}, where the semantic meanings of all nodes are explained in Appendix~\ref{detail_env}. 
We observe that the causal graphs are sparse with very few edges, indicating that non-causal methods that use the full graph may import redundant or even wrong information.

\subsection{Distance between Goal and State Distribution}

In Figure~\ref{fig:app_tv_distance}, we empirically show the upper bound proved in (\ref{eq:distance_bound}), which describes the TV distance between the goal distribution and the state distribution collected from the GRADER policy.
We use 10 trails and plot the mean and standard derivation of the distance.
We observe that the distance becomes smaller as the policy gets better in GRADER.
This supports our statement that the planning module helps to collect better data samples, which will be used in the causal discovery module.
We also plot the distance with a random policy, which is always large since the goal is not easy to be achieved by random actions.

\begin{figure}[t]
    \centering
    \includegraphics[width=1.0\textwidth]{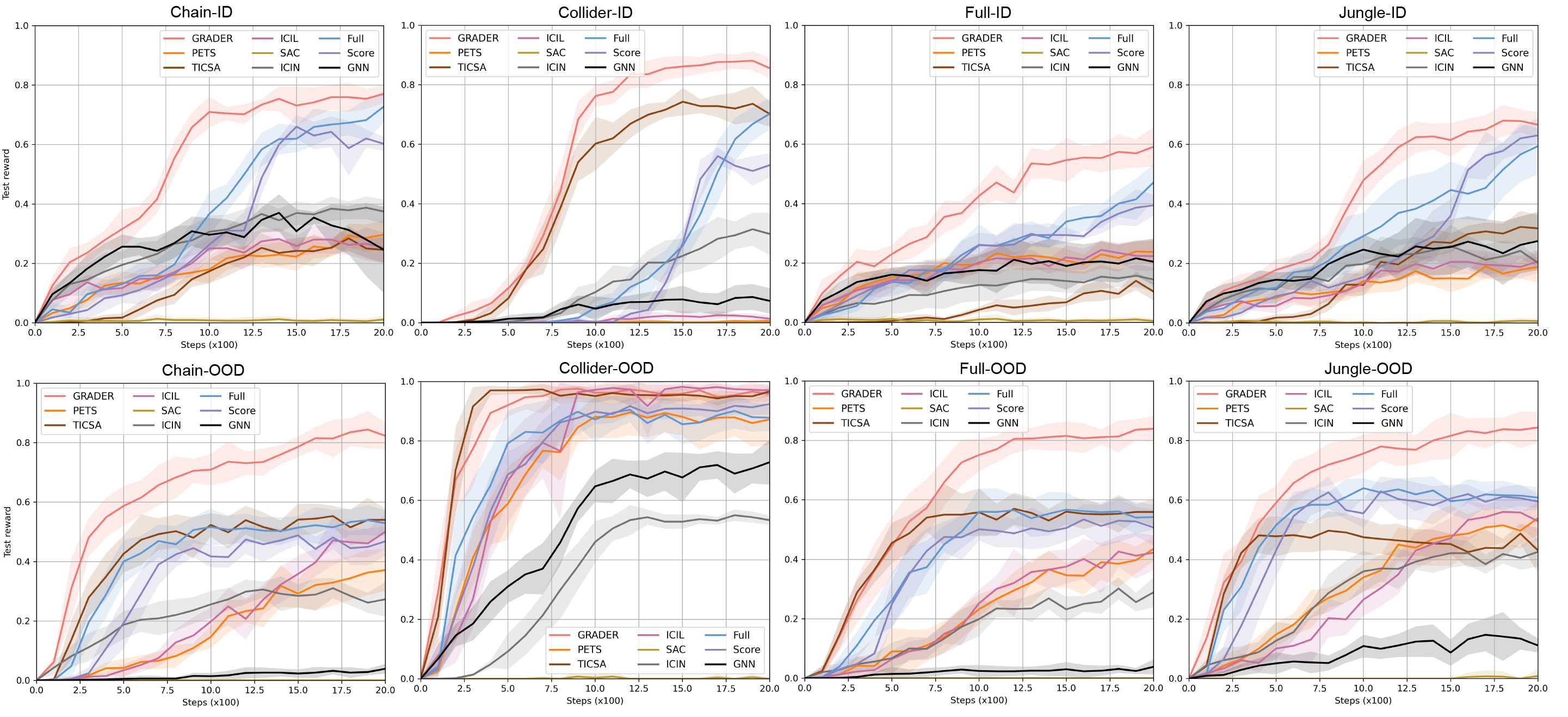}
    \caption{Reward of Chemistry environment under ID and OOD setting.}
    \label{fig:chemistry}
\vspace{4mm}
\end{figure}

\subsection{Task Performance of Chemistry Experiment}
\label{app:chemistry_experiment}

In the main context, we only show the discovery results of the Chemistry experiment. We provide the results of task performance in this section.
The downstream task is to change the color of nodes to match given colors within maximum steps ($T=10$). A reward $r=1$ is received if all colors are matched. Results are reported with $200$ episodes. We use planning horizon $H=5$. 
We provide the RL downstream task results in Table~\ref{tab:chemistry_id} (ID setting) and Table~\ref{tab:chemistry_ood} (OOD setting). The testing reward is shown in Figure~\ref{fig:chemistry}. 
The graphs in the ID setting have 10 nodes while those in the OOD setting have 5 nodes. In the ID setting, we randomly sample the target colors in the goal. In the OOD setting, we set the target colors of all nodes to the same color during the training to create spurious correlations, then randomly set the target colors during testing.

\begin{table}[ht]
%\vspace{-20}
\caption{Success rate (\%) for Chemistry environment (ID). \highest{Bold} font means the best.}
\label{tab:chemistry_id}
\centering
\small{
  \begin{tabular}{l|p{0.9cm} p{0.9cm} p{0.9cm} p{0.9cm} p{0.9cm} p{1.2cm} | p{0.9cm} p{0.9cm} p{0.9cm}}
    \toprule
    \small{Env}         & \small{SAC}                   & \small{ICIN}                       & \small{PETS}                      & \small{TICSA}                 
                        & \small{ICIL}                  & \small{GNN}  & \scriptsize{GRADER}                      & \small{Score}                     & \small{Full}                \\
    \midrule 
    \small{Collider}    & 0.0$\pm$0.0 & 29.8$\pm$7.2 & 0.6$\pm$0.8  & 70.1$\pm$4.2 & 1.3$\pm$1.3  &7.3$\pm$4.3 & \textbf{85.5$\pm$3.4} & 53.0$\pm$4.1 & 70.3$\pm$5.3 \\
    \small{Chain}       & 1.1$\pm$1.3 & 37.5$\pm$4.0 & 29.6$\pm$4.9 & 24.5$\pm$4.3 & 25.3$\pm$5.1 & 24.6$\pm$14.5 & \textbf{77.0$\pm$3.2} & 60.2$\pm$2.4 & 72.7$\pm$5.3  \\
    \small{Jungle}      & 0.6$\pm$0.8 & 20.2$\pm$1.5 & 18.8$\pm$5.2 & 31.8$\pm$4.5 & 20.6$\pm$3.9 & 27.5$\pm$9.8  & \textbf{69.6$\pm$4.3} & 63.0$\pm$2.3 & 59.4$\pm$9.5  \\
    \small{Full}        & 0.5$\pm$0.8 & 4.5$\pm$4.0  & 23.7$\pm$4.3 & 10.4$\pm$3.0 & 22.3$\pm$5.1 & 20.4$\pm$7.8  & \textbf{59.1$\pm$6.6} & 39.4$\pm$3.5 & 47.1$\pm$6.1  \\
    \bottomrule
  \end{tabular}
}
\end{table}

\begin{table}[ht]
%\vspace{-20}
\caption{Success rate (\%) for Chemistry environment (OOD). \highest{Bold} font means the best.}
\label{tab:chemistry_ood}
\centering
\small{
  %\begin{tabular}{c|c|p{1.1cm} p{1.3cm} p{1.3cm} p{1.3cm} | p{1.2cm} p{1.2cm} p{1.2cm}}
  \begin{tabular}{l|p{0.9cm} p{0.9cm} p{0.9cm} p{0.9cm} p{0.9cm} p{1.2cm} | p{0.9cm} p{0.9cm} p{0.9cm}}
    \toprule
    \small{Env}         & \small{SAC}                   & \small{ICIN}                       & \small{PETS}                      & \small{TICSA}                 
                        & \small{ICIL}                  & \small{GNN}  & \scriptsize{GRADER}                      & \small{Score}                     & \small{Full}                \\
    \midrule 
    \small{Collider}   & 0.0$\pm$0.0 & 53.3$\pm$1.6 &  87.2$\pm$8.5 &  96.6$\pm$1.4 &  97.0$\pm$2.0 &  72.8$\pm$7.5 & \textbf{95.8$\pm$2.6}     & 92.4$\pm$3.5 & 87.8$\pm$4.4 \\
    \small{Chain}    & 0.0$\pm$0.0 & 27.3$\pm$5.9 & 37.1$\pm$7.0 & 54.0$\pm$3.8 & 50.0$\pm$5.8  & 3.9$\pm$1.6  &  \textbf{82.3$\pm$4.5} & 46.8$\pm$5.0 & 52.9$\pm$4.3 \\
    \small{Jungle}    & 0.8$\pm$2.4 & 42.6$\pm$4.9 & 53.9$\pm$5.5 & 43.1$\pm$7.5 & 52.9$\pm$7.0  & 11.1$\pm$2.4 &  \textbf{84.4$\pm$5.1} & 59.5$\pm$2.7 & 60.8$\pm$3.5 \\
    \small{Full}   & 0.0$\pm$0.0 & 28.9$\pm$5.0 & 43.5$\pm$4.1 & 55.9$\pm$4.5 & 42.2$\pm$5.9  & 3.8$\pm$2.5 & \textbf{83.9$\pm$4.4} &  50.7$\pm$6.0 & 54.2$\pm$4.1  \\
    \bottomrule
  \end{tabular}
}
\vspace{-4mm}
\end{table}

\section{Additional Information}

\subsection{Details about Conditional Independence Test}

In Algorithm~\ref{algorithm1}, we describe the discovery of causal graph with edge inference $e_{ij} \leftarrow q_{\phi}(\cdot|\mathcal{B}, \eta)$ implemented by conditional independent test. We ignore the details about the test process in the main context and thus provide more details in this section.

For discrete variables, we use Pearson's chi-square test\footnote{https://en.wikipedia.org/wiki/Pearson\%27s\_chi-squared\_test}, which is a statistical test applied to sets of categorical data to evaluate how likely it is that any observed difference between the sets arose by chance. In our experiment, we use the implementation provided by package Scipy \footnote{https://docs.scipy.org/doc/scipy/reference/generated/scipy.stats.chi2\_contingency.html}.

We first define the null hypothesis, which is true when two random variables are statistically independent. These two variables have samples stored in a contingency table $O$, which has $c$ columns and $r$ rows. Then, the ``theoretical frequency'' for a cell is:
\begin{equation}
    E_{ij} = N_{p_{i\cdot}p_{\cdot j}},\ \ p_{i\cdot} = \sum_{j=1}^c \frac{O_{i,j}}{N},\ \ p_{i\cdot} = \sum_{i=1}^r \frac{O_{i,j}}{N}
\end{equation}
where $N$ is the total sample size in the table, $O_{i,j}$ is the sample size of cell $(i, j)$. Then, we can calculate the value of the test statistic:
\begin{equation}
    \chi^2 = \sum_{i=1}^r \sum_{j=1}^c \frac{(O_{i,j}-E_{i,j})^2}{E_{i,j}}
\end{equation}
Now, we can obtain a p-value (falls in $[0, 1] $) that indicates the significance of this statistic follows the $\chi^2$ distribution from chi-square probability\footnote{https://people.richland.edu/james/lecture/m170/tbl-chi.html}. We compare this p-value with a threshold $\eta$ and reject the null hypothesis if the p-value is smaller than $\eta$. Therefore, the larger we set $\eta$, the more likely we find the two variables are dependent. This testing process is summarized in Algorithm~\ref{algorithm2}.

If the two variables are continuous, we cannot use the above statistical test anymore. We turn to a more advanced test method proposed in~\cite{chalupka2018fast}. The general idea is that if $P(X|Y,Z) = P(X,Y)$, $Z$ is not useful as a feature to predict $X$. To achieve this, the authors propose to use decision tree regression to predict Y using both X and Z, and also using Z only.

\begin{algorithm}[H]
\caption{Independence Test for Discrete Variables.}
\label{algorithm2}
\KwIn{A contingency table $O$ with samples for two variables $X$ and $Y$.}

Define Null hypothesis: $X$ and $Y$ are independent. \\
Calculate $p_{i\cdot} = \sum_{j=1}^c \frac{O_{i,j}}{N}$ and $p_{i\cdot} = \sum_{i=1}^r \frac{O_{i,j}}{N}$ \\
Calculate expected frequencies $E_{ij} = N_{p_{i\cdot}p_{\cdot j}}$ \\
Calculate the chi-square statistic $\chi^2 = \sum_{i=1}^r \sum_{j=1}^c \frac{(O_{i,j}-E_{i,j})^2}{E_{i,j}}$ \\
Obtain p-value $p$ from chi-square probability \\
\If{p < $\eta$}
{
Reject the Null hypothesis, i.e., $X$ and $Y$ are dependent.
}
\end{algorithm}

\subsection{Experiment Details}

\subsubsection{Environment Design}
\label{detail_env}

More details about the design of the environments are summarized below:

\textbf{Stack}: 
Manipulation is important for house-holding and factory assembly.
Sometimes, the color of the object is not relevant to the task but may leak information by sharing spuriousness with the task. Also, the goal could compose several previously seen goals such as repeating similar actions. 
Totally, we have 5 different shapes and 5 different colors and the goals are some combinations of the colors and shapes. 
At each step, the agent can either stack an object with a chosen color and shape or stop stacking. 
The state is the colors and shapes of all current objects.
The agent receives a positive reward when the goal is achieved and a punishment if it stacks a new object.

\textbf{Unlock}: 
Collecting specific objects to fulfill required conditions is useful for mobile robots.
The causality in this environment exists between the key and the door. 
The action contains six operations, including four-direction movements (Move), pick key (Pick Key), and open door (Open Door).
The state is the position of the agent, the position of the key, and the status of the door.
In the first generalization setting, we intentionally create a spurious correlation between the position of the key and the door. If the agent figures out that key can open the door no matter what its position is, it will ignore the spurious correlation.
In the second generalization setting, we increase the number of door from one to two. This setting contains two same sub-tasks and can be used to test the compositional generalization.
   
\textbf{Crash}: 
The causality in this environment mainly exists between the pedestrian (Ped), the ego vehicle (Ego), and another vehicle (Car 1)~\cite{ding2022survey}. The collision between Ped and Ego only happens when the view of Ego is blocked by Car 1. To make this happen, we design a rule-based AV, which will brake if it detects any obstacles within a certain distance. Therefore, if the pedestrian directly hits the AV, the AV will stop and the crash will not happen.
To make this task difficult, we also place two other vehicles (Car 2 and Car 3) on the scene but they will not interrupt the crash scenario. 
The agent can control the acceleration and steering of Ped, Car 1, Car 2, and Car 3.
The state is the position and velocity of all objects plus the status of whether a collision happens.
To create a spurious correlation, we fix the initial distance between Ego and Ped to a constant since this creates a shortcut for the feature extractor. However, remembering this distance is not enough since we change the initial distance in the testing stage.

\textbf{Chemistry}: Please refer to \cite{ke2021systematic} for more details.

\begin{table}[h]
\renewcommand\arraystretch{1.1}
\caption{Environment configurations used in experiments}
\label{app:parameters}
\centering
  \begin{tabular}{c|c|c|c|c}
    \toprule
    Parameters                  & Stack   & Unlock  & Crash & Chemistry  \\
    \midrule
    Max step size               & 5       & 15     & 30  & 10 \\
    State dimension             & 50      & 110    & 22  & 100 \\
    Action dimension            & 12      & 8      & 8   &  100 \\
    Action type                 & Discrete    & Discrete      & Continuous  & Discrete  \\
    \bottomrule
  \end{tabular}
\end{table}

\begin{table}[h]
\renewcommand\arraystretch{1.1}
\caption{Hyper-parameters of models used in experiments}
\label{app:parameters}
\centering
\begin{threeparttable}
  \begin{tabular}{c|c|c|c|c|c}
    \toprule
    \multirow{2}{*}{Models} & \multirow{2}{*}{Parameters} & \multicolumn{3}{c}{Environment} \\
    \cline{3-5}
              &  & Stack   & Unlock  & Crash & Chemistry \\
    \midrule
    \multirow{10}{*}{GRADER} 
    & Learning rate               & 0.001   & 0.001  & 0.0001 & 0.001  \\
    & Size of buffer $\gB$        & 4000   & 10000  & 10000   & 4000 \\
    & Epoch per iteration         & 20      & 5     & 10     & 20  \\
    & Batch size                  & 256   & 256  & 256       & 256\\
    & Planning horizon $H$        & 5      & 10    & 20    & 5 \\
    & Planning population         & 500   & 100  & 1000    & 700  \\
    & Reward discount $\gamma$    & 0.99   & 0.99  & 0.99  & 0.99  \\
    & $\epsilon$-greedy ratio     & 0.4   & 0.4  & 0.5     & 0.5  \\
    & Causal Discovery $\eta$     & 0.01   & 0.01  & 0.01  & 0.01  \\
    & GRU hiddens                 & 32   & 64  & 128       & 32 \\
    \midrule
    \multirow{3}{*}{PETS\tnote{*}}         
    & MLP hiddens & 32 & 64 & 128  & 32 \\
    & MLP layers   & 2     & 2  & 2  & 2 \\
    & Ensemble number & 5 & 5 & 5  & 5\\
    \midrule
    \multirow{5}{*}{TICSA\tnote{*}}  
    & Size of buffer $\gB$           & 20000   & 400000  & 40000 & 20000  \\
    & Pretrain buffer                & 200   & 2000  & 5000 & 200  \\
    & Initialized mask coef.         & 1.0   & 1.0  & 1.0  & 1.0 \\
    & MLP hiddens                     & 32   & 64  & 128   & 32  \\
    & Sparsity regularizer           & 0.5   & 1.0  & 0.2  & 0.5 \\
    \midrule
    \multirow{5}{*}{ICIL\tnote{*}} 
    & Size of buffer $\gB$           & 20000   & 400000  & 40000 & 20000   \\
    & Learning rate of MINE         & 0.0001 & 0.0001 & 0.0001   & 0.0001\\
    & MLP hiddens                   & 32   & 64  & 128  & 32 \\
    & MINE hiddens                    & 32   & 64  & 128  & 32 \\
    & Env. Numbers                 & 5 & 3 & 3 & 5 \\
    \midrule
    \multirow{8}{*}{SAC}         
    & Learning rate               & 0.001   & 0.001  & 0.0001  & 0.001  \\
    & Size of buffer $\gB$        & 4000   & 10000  & 10000    & 4000 \\
    & Update step $\tau$          & 0.005   & 0.005  & 0.0001  & 0.005 \\
    & Update iteration            & 3      & 3  & 3 & 3  \\
    & Entropy $\alpha$            & 0.2   & 0.2  & 0.2  & 0.2 \\
    & Batch size                  & 256    & 256  & 256  & 256 \\
    & Reward discount $\gamma$    & 0.99   & 0.99  & 0.99   & 0.99 \\
    & MLP hiddens                 & 64     & 128  & 256  & 64 \\
    \midrule
    \multirow{5}{*}{ICIN} 
    & Learning rate               & 0.001   & 0.001  & 0.0001  & 0.001  \\
    & Size of buffer $\gB$        & 4000   & 10000  & 10000    & 4000\\
    & Batch size                  & 256   & 256  & 256  & 256 \\
    & MLP hiddens                 & 64     & 128  & 256  & 64 \\
    & MLP layers                  & 3     & 3  & 3  & 3  \\
    \bottomrule
  \end{tabular}
 \begin{tablenotes}
    \item[*] Use the same planning parameters as GRADER.
 \end{tablenotes}
 \end{threeparttable}
\end{table}

\subsubsection{Model Structures and Hyper-parameters}
\label{app:implementation}

Since different nodes have different dimensions, we design a set of encoders $E_j$ and a set of decoders $D_j$ to convert the dimension of features. Thus, the entire model structure is 
\begin{equation}
    s_j^{t+1} = D_j (f_{\theta_j}(E_j([ \textbf{PA}_j^\gG]^t, N_j))),\ \ \ \forall j \in [M]
\end{equation}
We list all important hyper-parameters in the implementation for three environments in Table~\ref{app:parameters}.

\subsection{Broader Social Impact and Additional Limitation}
\label{app:social}

\subsubsection{Broader Social Impact}

We identify several important social impacts of our proposed method, including both positive and potential negative impacts:
\begin{itemize}[leftmargin=0.3in]
    \item[1)] Incorporating causality into reinforcement learning methods increases both the interpretability and generalizability of artificial intelligence, which helps users easily check the working progress of agents and the source of failures.
    \item[2)] Insufficient data and training may cause flawed causal graphs, which may lead to a wrong understanding of the causation of the task. This wrong understanding of the task may cause risky and irrational actions of agents.
    \item[3)] The discovered causal graph could be accessed and modified by users to manipulate the behaviors of agents on purpose. If the task contains private information, the discovered causal graph may cause privacy issues when the graph is interpreted by other users.
\end{itemize} 

% How to mitigate these risks?
To mitigate the potential negative societal impacts mentioned above, we encourage research to follow these instructions:
\begin{itemize}[leftmargin=0.3in]
    \item[1)] People should always check the convergence of the causal discovery step and verify the discovered causal graph with domain knowledge.
    \item[2)] The discovered causal graph should be frequently checked and verified with the training data to ensure its correctness. The causal graphs also need to be encrypted and only accessible to algorithms and trustworthy users.
\end{itemize}

\subsubsection{Additional Limitation}

\textbf{Causal discovery methods.} The gradient-based discovery method are widely investigated recently for large datasets since they have good scalability. However, these methods also require lots of training data to converge. In Online RL, we don’t have enough data at the beginning of training. Thus, constraint-based methods are more suitable for causal RL tasks.

Although our constraint-based causal discovery does not scale as well as the score-based methods, our proposed independence tests achieve a time complexity of $\Omega(|S|(|S|+|A|))$, which is tolerable for most RL problems with lower dimensional state space. Empirical studies also show that our independent tests enjoy better data efficiency. 

\textbf{Assumptions in our theoretical analysis.}
Faithfulness and Markov properties are commonly used in causal discovery literature such as~\cite{brouillard2020differentiable, chickering2002optimal}. It is claimed in \cite{chickering2002optimal} that the oracle independent test can be ensured by the satisfaction of Markov property and faithfulness. Recent work~\cite{pmlr-v119-addanki20a} also assumes the oracle of conditional independent test in its Assumption 2.1. Practically, the oracle test can be implemented with certain sub-linear sample complexity, as is investigated in~\cite{canonne2018testing}.

In reinforcement learning tasks, agents interact with the environment by doing interventions, which is achieved by assigning values to action nodes. Then, the intervention results are reflected by the states. Under fully observable Markov settings, the value of these states contains all information about the intervention. Thus, our RL setting usually satisfies the assumptions we use in the theoretical proof. 

\end{document}